\documentclass[letterpaper]{article} % DO NOT CHANGE THIS
\usepackage%[submission]
{aaai2026}  % DO NOT CHANGE THIS
\usepackage{times}  % DO NOT CHANGE THIS
\usepackage{helvet}  % DO NOT CHANGE THIS
\usepackage{courier}  % DO NOT CHANGE THIS
\usepackage[hyphens]{url}  % DO NOT CHANGE THIS
\usepackage{graphicx} % DO NOT CHANGE THIS
\usepackage{xcolor}
\usepackage{amsfonts}
\usepackage{amsmath}
\usepackage{natbib}  % DO NOT CHANGE THIS AND DO NOT ADD ANY OPTIONS TO IT
\usepackage{caption} % DO NOT CHANGE THIS AND DO NOT ADD ANY OPTIONS TO IT
\usepackage{bm}
\usepackage{stmaryrd} % Where \llbracket and \rrbracket are defined
\usepackage{algorithm}
\usepackage{algorithmic}

\usepackage{newfloat}
\usepackage{listings}
\DeclareCaptionStyle{ruled}{labelfont=normalfont,labelsep=colon,strut=off} % DO NOT CHANGE THIS
\floatstyle{ruled}
\newfloat{listing}{tb}{lst}{}
\floatname{listing}{Listing}
\nocopyright% -- Your paper will not be published if you use this command
\usepackage{tikz}
\usetikzlibrary{arrows}
\usetikzlibrary {quotes}
\tikzset{node quotes mean={label={[#2,name={#1}]#1}}}
\usepackage{graphicx} % 

\newcommand{\eff}{\operatorname{eff}}
\newcommand{\pre}{\operatorname{pre}}

\newtheorem{definition}{Definition}
\newtheorem{lemma}{Lemma}

\newtheorem{theorem}{Theorem}

\title{Intermediate Results on the Complexity of STRIPS\(_{1}^{1}\)}
\author{}
\date{}

 \author{
 Stefan Edelkamp\textsuperscript{\rm 1}, 
 Jiří Fink \textsuperscript{\rm 1},
 Petr Gregor\textsuperscript{\rm 1}, 
 Anders Jonsson\textsuperscript{\rm 3}, 
 Bernhard Nebel\textsuperscript{\rm 2} 
 }

\affiliations{
    \textsuperscript{\rm 1} School of Computer Science,
Faculty of Mathematics and Physics, Charles University, Prague, 
Czech Republic \\
 \textsuperscript{\rm 2} Artificial Intelligence and Machine Learning Group,
Pompeu Fabra University, Barcelona, Spain \\
 \textsuperscript{\rm 3} Department of Computer Science, 
 University of Freiburg,
 Georges-K\"ohler-Allee, 79110 Freiburg, Germany \\
}

\usepackage{bibentry}
\begin{document}

\maketitle

\begin{abstract} 
This paper is based on Bylander's results on the computational complexity of propositional STRIPS planning. He showed that when only ground literals are permitted, determining plan existence is PSPACE-complete even if operators are limited to two pre-conditions and two postconditions. While NP-hardness is settled, it is unknown whether propositional STRIPS with operators that only have one precondition and one effect is NP-complete.
%, or whether it generates exponentially long plans. 
We shed light on the question whether this small solution hypothesis for STRIPS\(_{1}^{1}\) is true, calling a SAT solver for small instances, introducing the literal graph, and mapping it to Petri nets.
\end{abstract}

\section{Introduction}

STRIPS \cite{DBLP:journals/ai/FikesN71}, the acronym of the Stanford Research Institute Planning System, is one the earliest formalisms for action planning. The STRIPS planning language has greatly influenced later developments of the planning domain description language PDDL as proposed by McDermott (\citeyear{DBLP:journals/aim/McDermott00}), Fox and Long~(\citeyear{DBLP:journals/jair/FoxL03}) or Hoffmann and Edelkamp (\citeyear{DBLP:conf/gg/EdelkampJL06}), and has been compared to other finite-state variable planning such as SAS$^+$ \cite{DBLP:conf/ijcai/BackstromN93,DBLP:conf/ki/Nebel99,DBLP:journals/jair/Helmert06}.

Bylander~(\citeyear{DBLP:journals/ai/Bylander94}) studied the complexity of STRIPS planning and showed that the problem of determining plan existence is PSPACE-complete, meaning that the decision problem can be solved using polynomial memory, and that every PSPACE-complete problem polynomially reduces to it. Bylander's reduction proof uses an encoding of a Turing machine with STRIPS operators. PSPACE containment is derived through a divide-and-conquer planning strategy.
An important research question is to establish fragments of STRIPS planning that have lower complexity, i.e., 
being polynomial or NP-complete. 
%Although it is currently unknown whether P $\neq$ NP is in fact (arguably the most famous unsolved problem in computer science), it seems highly unlikely that the polynomial hierarchy collapses, in which case such fragments are indeed easier to solve than general STRIPS planning. Even when not directly applicable to a given STRIPS planning problem, tractable fragments are commonly used to compute heuristics that guide search~\cite{DBLP:journals/jair/HoffmannN01,Katz08}.

One famous fragment of STRIPS planning is  delete relaxation~\cite{DBLP:journals/jair/HoffmannN01}, which is known to be NP-complete. Bylander took a different route and asked what happens if the operators of a STRIPS problem are limited to at most $p$ preconditions and $e$ effects, giving rise to fragments \textsc{STRIPS}$_p^e$. Concretely, Bylander showed that STRIPS$_1^1$ is NP-hard. This follows from his results for the NP-completeness of STRIPS$_{1}^{1+}$ (e.g., the effect must be positive), and since that's a special case of STRIPS$_1^1$ NP-hardness follows. Later on, Jonsson et al.~(\citeyear{jonsson2014limitations}) proved that \textsc{STRIPS}$_2^1$ is PSPACE-complete. In Chapter 5 they wrote: \emph{Bylander showed that the problem of plan existence is PSPACE-complete for STRIPS planning instances whose actions have one postcondition and unbounded number of preconditions (either positive or negative). He did not prove a result for a bounded number $k$ of preconditions, but conjectured that plan existence falls into the polynomial hierarchy in a regular way, with the precise complexity determined by $k$. In this section we modify our previous reduction such that actions have at most two preconditions, thus showing that plan existence is
PSPACE-complete for $k=2$ and proving Bylander's conjecture to be wrong.}. However, the exact complexity of STRIPS$_1^1$ remains unknown, i.e.~there is no proof of containment in NP nor for PSPACE-hardness.

In this paper, we consider the following question for STRIPS$^1_1$: Given a set of $n$ propositions (facts, atoms) is it possible to find a set of STRIPS actions of at most one precondition and one effect that generate shortest plans of exponential size in $n$?
%This is an open research problem, and 
This paper attempts to shed new light on this question. 
%The implications are as follows: 
%\begin{itemize}
%  \item 
If there are exponentially long shortest plans, we would need exponential time generating them. While such EXP-TIME problems do not contradict polynomial space (as it was shown with the general STRIPS problem), by definition we could get closer to the assertion that the problem was not in NP. However,
  from an exponential lower bound on the plan length, we cannot directly infer that the problem was not in NP, as there might be other 
  polynomial certificates. 
%  \item
If at least one plan has polynomial length, then we can verify it in polynomial time, implying containment in NP.
%\end{itemize}
%
We aim at closing the gap from different sides: this paper gives proof insights and an AI assisted proof attempt, it introduces the literal graph and overlap tree with examples and observation on plan finding there. We recall a scaling system that leads to plans quadratic in the solution length, we prove with a SAT solver the maximum plan length for $n=5$ and $n=6$ is smaller than $2^n$, we give mathematical derivations indicating that this will always be the case for larger values of $n$, we provide an LLM generated proof attempt and compilation schemes to conservative safe Petri nets. and cooperative directed MAPF.

\section{STRIPS\(_{1}^{1}\)}

Let \(V\) be a set of propositional variables. A \emph{literal} \(l\) is a positive or negative variable in \(V\), i.e. \(l=v\) or \(l=\bar{v}\) for some \(v\in V\). A set of literals \(L\) is well-defined if and only if \(v\notin L\) or \(\bar{v}\notin L\) for each \(v\in V\). A \emph{state} \(s\) is a well-defined set of literals such that \(|s|=|V|\). A set of literals \(L\) \emph{holds} in a state \(s\) if and only if \(L\subseteq s\). The result of applying a well-defined set of literals \(L\) to a state \(s\) is a new state \(s\oplus L=(s\setminus\bar{L})\cup L\), where \(\bar{L}=\{\bar{l} \mid l\in L\}\) contains the negation of all literals in \(L\). Given a set of propositional variables \(V\), let \(L(V)= V \cup \bar{V}\) be the set of all literals on \(V\).

A {STRIPS planning problem} is a tuple \(\bm{d}=\langle V,A\rangle\) where \(V\) is a set of propositional variables and \(A\) is a set of actions. Each action \(a=\langle\mathrm{pre}(a),\mathrm{eff}(a)\rangle\in A\) has precondition \(\mathrm{pre}(a)\) and effect \(\mathrm{eff}(a)\), both well-defined sets of literals in \(L(V)\). Action \(a\) is applicable in state \(s\) if and only if \(\mathrm{pre}(a)\) holds in \(s\) and results in a new state \(s\oplus\mathrm{eff}(a)\). Two states \(s_{I}\) and \(s_{G}\) induce a planning instance \(\bm{p}=\langle V,A,s_{I},s_{G}\rangle\). A plan for \(\bm{p}\) is a sequence of actions \(\omega=\langle a_{1},\ldots,a_{k}\rangle\) such that \(\mathrm{pre}(a_{1})\) holds in \(s_{I}\) and, for each \(1<i\leq k\), \(\mathrm{pre}(a_{i})\) holds in \(s\oplus\mathrm{eff}(a_{1})\oplus\cdots\oplus\mathrm{eff}(a_{i-1})\). The plan \(\omega\) solves \(\bm{p}\) if and only if \(s\oplus\mathrm{eff}(a_{1})\oplus\cdots\oplus\mathrm{eff}(a_{k})=s_{G}\). Note that we only consider fully specified goal states \(s_{G}\). Given \(\omega\), a subplan of \(\omega\) is a subsequence of actions in \(\omega\).

Let STRIPS\(_{1}^{1}\) be the class of planning tasks with actions that have one precondition and one effect. Given two integers \(m\) and \(n\), we use \(\llbracket m,n\rrbracket\) to denote the (possibly empty) set \(\{m,\ldots,n\}\), and we use \(\llbracket n\rrbracket\) to denote the (possibly empty) set \(\{1,\ldots,n\}\). We may drop the parenthesis for singleton sets
(e.g., for preconditions and effects).

\section{Precursor Work on STRIPS\(_{1}^{1}\) }

\citeauthor{strips11workshop} (\citeyear{strips11workshop}) 
came up with initial observations on the STRIPS$_1^1$ problem, 
first exploration results, and its mapping to the induced path length problem in an $n$ dimensional hypercube. 
Among other systems, they proposed schematic ways to generate STRIPS$_1^1$ problems. 
E.g., let ${\cal P}_n$ be the STRIPS$^1_1$ planning task that, for $V = \{ 0,1,2,\ldots \}$, includes the following actions 
\begin{itemize}
  \item for $i \in \llbracket 1,n-1\rrbracket$ take $\langle\operatorname{pre}(i-1), \operatorname{eff}(i)\rangle$ \\ \qquad(add $i$ whenever $i-1$ is true)
  \item for  $i \in \llbracket 1,n-1\rrbracket$ take  $\langle\overline{\operatorname{pre}(i-1)}, \overline{\operatorname{eff}(i)}\rangle$ \\ \qquad(delete $i$ whenever $i-1$ is false)
%\end{itemize}
%and
%\begin{itemize}
  \item 
  $\langle\overline{\operatorname{pre}(n-1)}, \operatorname{eff}(0)\rangle$ (add $0$ when $n-1$ is false)
  \item 
$\langle\overline{\operatorname{pre}(2)}, \operatorname{eff}(0)\rangle$ (add $0$ when $2$ is false)
  \item 
$\langle\operatorname{pre}(n-1), \overline{\operatorname{eff}(0)}\rangle$ (delete $0$ when $n-1$ is true)
\end{itemize} 
with the initial state set to $(0\cdots 0)$. %${\cal P}_n$ generates long minimal plans. 
E.g.,  ${\cal P}_6$ has actions
\medskip

\begin{tabular}{cccccc} 
$a_0$ &=& $\langle \operatorname{pre}(0),\operatorname{eff}(1) \rangle$, &
$a_1$ &=& $\langle \operatorname{pre}(1),\operatorname{eff}(2) \rangle$, \\
$a_2$ &=& $\langle \operatorname{pre}(2),\operatorname{eff}(3) \rangle$, &
$a_3$ &=& $\langle \operatorname{pre}(3),\operatorname{eff}(4)\rangle$, \\
$a_4$ &=& $\langle \operatorname{pre}(4),\operatorname{eff}(5)\rangle$, &
$a_5$ &=& $\langle \overline{\operatorname{pre}(5)},\operatorname{eff}(0)\rangle$, \\
$a_6$ &=& $\langle \overline{\operatorname{pre}(0)},\overline{\operatorname{eff}(1)}\rangle$, &
$a_7$ &=& $\langle \overline{\operatorname{pre}(1)},\overline{\operatorname{eff}(2)}\rangle$, \\
$a_8$ &=& $\langle \overline{\operatorname{pre}(2)},\overline{\operatorname{eff}(3)}\rangle$, &
$a_9$ &=& $\langle \overline{\operatorname{pre}(3)},\overline{\operatorname{eff}(4)}\rangle$, \\
$a_{10}$ &=& $\langle \overline{\operatorname{pre}(4)},\overline{\operatorname{eff}(5)}\rangle$,  &
$a_{11}$ &=& $\langle \operatorname{pre}(5),\overline{\operatorname{eff}(0)}\rangle$, \\
$a_{12}$ &=& $\langle \overline{\operatorname{pre}(2)},\operatorname{eff}(0)\rangle$ 
\end{tabular}

\medskip

generates the following maximal shortest plan (generated backwards from the farthest state in the solver): $101101\leftarrow$ \\ 
%\begin{scriptsize}
$101001\leftarrow101011\leftarrow101010\leftarrow001010\leftarrow011010\leftarrow010010\leftarrow010110\leftarrow010100\leftarrow010101\leftarrow110101\leftarrow100101\leftarrow000101\leftarrow001101\leftarrow011101\leftarrow111101\leftarrow111001\leftarrow110001\leftarrow100001\leftarrow000001\leftarrow000011\leftarrow000111\leftarrow001111\leftarrow011111\leftarrow111111\leftarrow111110\leftarrow111100\leftarrow111000\leftarrow110000\leftarrow100000\leftarrow000000$.
%\end{scriptsize}
%Our program found the goal in depth 30 with 64 states being expanded.
 
The end states of ${\cal P}_n$ were of type $(10)(11)(01)^{k-2}$ for $n=2k$ and states of $(10)(111)(01)^{k-2}$ for $n=2k+1$. The solution lengths grew as follows

\medskip 
\noindent
  \begin{scriptsize}
  \begin{tabular}{c|cc|cc|cc|cc|cc|c}
$k$ & 3 &  3 &  4 &  4 &  5 &  5 &  6 & 6 & 7 & 7 & 8 \\ \hline
$n$ & 6 &  7 &  8 &  9 & 10 & 11 & 12 &  13 &  14 &  15 &  16 \\ \hline
$l$ & 30 & 35 & 49 & 56 & 72 & 81 & 99 & 110 & 130 & 143 & 165  
  \end{tabular}
  \end{scriptsize} 

\medskip

 It was shown that the observed growth of the optimum solution lengths of ${\cal P}_n$ is at most quadratic: for the difference $d(k)=l(k)-l(k-1)$, there are two cases. For $n$ being even we obtain the sequence $d_e(k)= +19,+23,+27,+31,+35,\ldots$ (increase is 4). For $n$ being odd we obtain the sequence $d_o(k)=+21,+25,+29,+34,+38\ldots$ (increase is 4). 

In both cases $d(k) - d(k-1) = 4$, which implies $d(k) = 4 + d(k-1) = ... = 4\cdot k + c$. To determine constant $c$, we distinguish $n$ even, where $d(3) = 19 = 12 + c_e$ implies $c_e = 7$, while $n$ odd, where $d(3) = 21 = 12 + c_o$ implies $c_o = 9$ 
We have $l(k) - l(k-1) = d(k-1)$, so that $l(k) = l(k-1) + d(k-1)$. Given the closed form for $d$ yields
\begin{eqnarray*}
l(k) &=& l(k-1) + (4k+c) \\
     &=& l(k-2) + (4k+c) + (4(k-1)+c) \\
     &=& \ldots = b + ck + 4 (k (0+1+\ldots+k)) 
     % &=& b + ck + 2 (k (k-1)) 
     =\Theta(n^2)
\end{eqnarray*}
Similarly, $l(n)=\Theta(n^2)$ for $k = n/2$. As sharp bounds for $n$ even, i.e., $n=2k$, we have $l(3) = 30 = b_e + 7\cdot 3 + 2 \cdot 6$ yields $b_e = 30 - 21 - 12 = -3$ and $l(k) = -3 + 5k + 2k^2$; and for $n$ odd, i.e.,  $n=2k+1$, we have $l(3) = 35 = b_o + 9\cdot 3 + 2 \cdot 6$ yields $b_o = 35 - 27 - 12 = -4$ and $l(k) = -4 + 7k + 2k^2$. 

The authors introduce two more systems. 
${\cal Q}_n$ and ${\cal R}_n$.
While for small values of $n$ the system ${\cal Q}_n$ has longer shortest solutions than ${\cal P}_n$ and ${\cal R}_n$ 
Table~\ref{tab:growth3length} illustrates that for larger values of $n$ this no longer is the case. While ${\cal R}_n$ strictly outperformed ${\cal P}_n$ we can see that the optimal solution lengths of ${\cal R}_n$ outgrow ${\cal P}_n$ only by a linear amount of at most $(k+2)$. 

\begin{table}[t]
  \centering
  \begin{scriptsize}
  
  \begin{tabular}{c|cc|cc|cc|cc|cc|c}
$k$ & 3 &  3 &  4 &  4 &  5 &  5 &  6 & 6 & 7 & 7 & 8 \\ \hline
$n$ & 6 &  7 &  8 &  9 & 10 & 11 & 12 &  13 &  14 &  15 &  16 \\ \hline
$l_P$ & 30 & 35 & 49 & 56 & 72 & 81 & 99 & 110 & 130 & 143 & 165 \\ \hline 
$l_Q$ & {\bf 38} & {\bf 47} & {\bf 57} & {\bf 65} & 74 & 83 & 92 & 101 & 110 & 119 & 128 \\  \hline 
$l_R$ & {31} & {38} & 52 & 59 & {\bf 77} & {\bf 86} & {\bf 106} & {\bf 117} & {\bf 139} & {\bf 152} & {\bf 176}  
  \end{tabular}
  \end{scriptsize}
  \caption{Growth of optimal solution length $l_p$, $l_q$ and $l_r$ wrt.\ $n$ and $k$ in ${\cal P}_n$, ${\cal Q}_n$ and  ${\cal R}_n$, respectively. }
  \label{tab:growth3length}
\end{table}
 
Although these experiments on scaling systems do not prove that solution lengths of all possible systems grow polynomially, based on further experiments and different systems, the authors conjectured that STRIPS\(_{1}^{1}\) will not induce exponentially long plans.

\section{Hypercubes}

Let us first take a look at the $n$-dimensional hypercube over $\{0,1\}^n$ with planning states being mapped to hypercube nodes and STRIPS$^1_1$ actions with a single effect follow one edge. The hypercube on $n$ variables has $2^n$ nodes and $n 2^{n-1}$ undirected edges, or $n2^n$ directed edges. We have $2n \cdot 2n = 4n^2$ different STRIPS$_1^1$ actions. All connect the same number of nodes. For each pair of adjacent nodes, there are $n$ preconditions that hold, i.e.~each edge corresponds to $n$ actions. If we color the hyperedges using $n$ different colors, each action corresponds to $n^2 2^n / 4 n^2 = 2^{n-2}$ directed colored edges, which is an exponential number, quite substantial for a graph with $2^n$ nodes. This makes it hard to form exponentially long plans with no cycles, since each STRIPS$_1^1$ action covers $2^{n-2}$ edges, or $2^{n-1}$ for precondition-free actions.
We have $2^n$ vertices and $n2^{n-1}$ edges in the hypercube $Q_n$.
Every single STRIPS$_1^1$ action fixes two bits: one on the source vector
and one in the target vector, and the one in the target vector must flip, 
so its base, corresponds to a hypercube $Q_{n-2}$ with 
$2^{n-2}$ vertices (and $(n-2)2^{(n-2)-1} = (n-2) 2^{n-3}$ edges). 
The number of the $m$-dimensional hypercubes (alias $m$-cubes) 
contained in the boundary of an $n$-cube is
$2^{n-m}{n \choose m}$
with $m=n-2$ we have 
$$2^{n-(n-2)}{n \choose n-2}=\frac{4(n!)}{2 (n-2)!} = 2n(n-1)$$
different cubes $Q_{n-2}$ in $Q_n$.
    
With $n$ variables, positive or negative in preconditions and effects, there are at most $(2n)(2n) = 4n^2 = \Theta(n^2)$ 
different STRIPS$_1^1$ actions. As we do not allow repeating variables, then this value
reduces to $2n(2(n-1)) = 4n^2-4n= \Theta(n^2)$. This is exactly twice the number of hypercubes
$Q_{n-2}$ as preconditions and effects can be exchanged without changing the cube. Otherwise
each STRIPS$_1^1$ action selects between two orientations of $Q_{n-2}$ (see Fig.~\ref{fig:hyper5})

As the problem is symmetric and the variables are interchangeable, without loss of
generality, we can assume the travel of the longest shortest path 
to start at $(0,\ldots,0)$, so that the problem to find 
the STRIPS$_1^1$ actions generating the longest travel distance 
in fact equivalent to finding the eccentricity of the graph starting at $(0,\ldots,0)$.

\begin{figure}[t]
    \centering
    \includegraphics[width=0.95\linewidth]{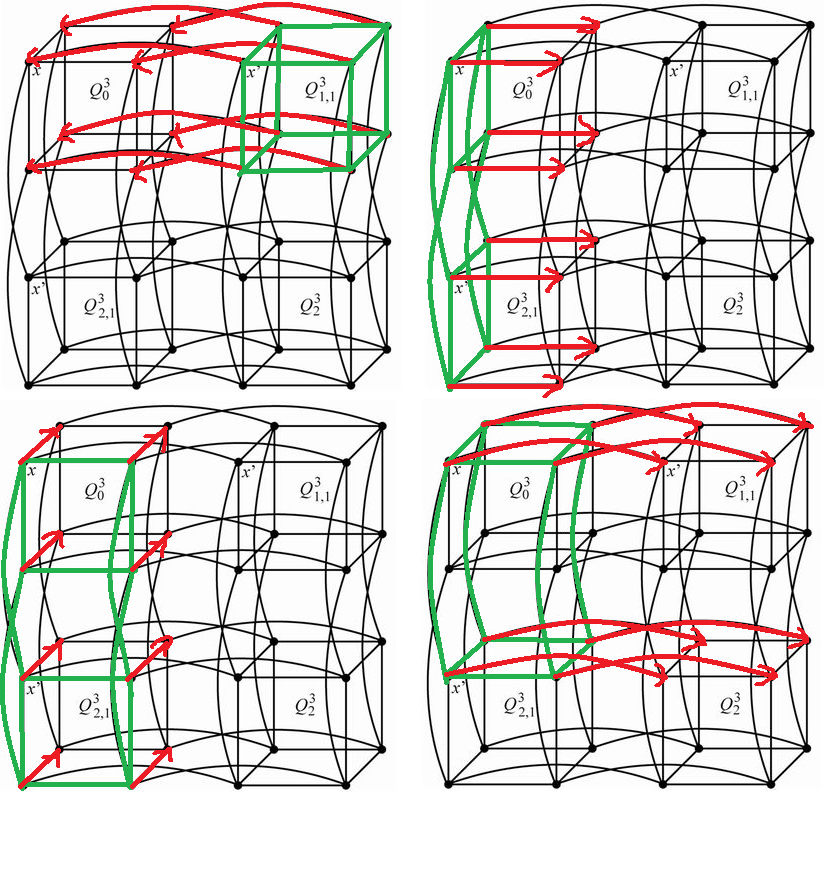}
    \caption{Sketch of four selected STRIPS$^1_1$ actions in the $5$-dimensional hypercube $Q_5$, green lines indicating the start and red arrows indicating the end of the actions, there are $2^{n-2} = 8$ edges associated edges with each action and there are overlaps. In other words, green edges correspond to fulfilled preconditions and red edges to action effects.}
    \label{fig:hyper5}
\end{figure}

There is work on recursively constructed directed hypercubes~\cite{DBLP:journals/combinatorics/Domshlak02}. It is shown by construction that
these graphs have one source and one sink and while the diameter is exponential in $n$, the distance
from source to any node is at most $n$ and from any node to the sink,
too. Unfortunately, the direction in the hypercube defined by STRIPS$_1^1$ actions
is not recursively defined, but by directed hypercubes of dimension $n-2$.

Everett and Gupta~(\citeyear{EVERETT1989243}) showed 
that there are directed hypercubes that are acyclic and that
have exponential path length, with a construction that is 
based on Fibonacci numbers. The main observation is that edges
of any (chordless) path can be oriented so that this path is shortest.

Gregor~(\citeyear{PetrGregor}) studied hypercubes in his PhD thesis including the
findings of Hamiltonian cycles and the inclusion of faulty edges to be 
avoided. He also looks at Fibonacci cubes as subcubes of
the hypercube, where the only labels that are allowed 
are bitstrings with no two consecutive 1 bits. 

The problem we are facing is related to the snake-in-the-box problem; a longest path problem in a $n$-dimensional hypercube. The design of a long snake has impact for the generation of improved error-correcting codes. The snake increases in length, but must not approach any of its previous visited vertices with Hamming distance 1 or less. It has been proven that the snake-in-the-box problem generates longest plans that are exponentially large~\cite{DBLP:journals/jct/AbbottK88,DBLP:journals/dm/AbbottK91}. The formal definition of the problem and its variants as well as heuristic search techniques for solving it have been studied by~\cite{DBLP:conf/socs/PalomboSPFKR15,DBLP:conf/ki/EdelkampC16}.

\section{SAT Encodings}

While results of \citeauthor{strips11workshop} (\citeyear{strips11workshop}) were based on explicit search, we used SAT solvers to find the length of a longest plan for a given $n$. We used \emph{kissat} 
%(for keep it simple and clean bare metal SAT solver) 
developed by \citeauthor{Biere} (\citeyear{Biere}) as a single thread SAT solver.
It is described as the port of another solver (CaDiCaL) solver back to C with improved data structures, better scheduling of inprocessing and optimized algorithms and implementation.

The conjunctive normal form (CNF) we generated has three groups of variables:
action variables decide for every two literals whether there is an action, path variables determine the state for every time, and breadth-first search (BFS) variables determine whether a state is reachable at a time. Note that BFS variables only give an implication: If a state $u$ is reachable in time $t$, then the corresponding variable is true. Therefore, path and BFS variables are joined together: if $u$ is $t$-th state on the path, then BFS variable for $u$ in time $t-1$ must be false.
In other words, path variables prove existence of a solution of a given length, and BFS variables prove non-existence of a shorter path.

For 5 variables, we obtained a computer-assisted proof that 30 states (29 actions) are optimal. Therefore, the maximal possible path length is strictly below $2^n$ and not all states can be reached.
For 6 variables, we found a solution plan with 40 states (39 actions), which is 1 more than the best result reported by \citeauthor{strips11workshop} (\citeyear{strips11workshop}). We verified that there is no longer shortest plan for $n=6$. The computation took about 2 weeks on one core of a contemporary CPU. For 7 variables, we found a plan with 51 states (50 actions)
%(see Fig.~\ref{fig:sat751}), 
but could not prove a matching upper bound.
%\begin{figure}
%    \centering
%\includegraphics[width=0.49\linewidth]
%{AnonymousSubmission/LaTeX/bfs_7_51.pdf}
%    \caption{Best SAT solver result for $n=7$.}
%    \label{fig:sat751}
%\end{figure}

\section{Shortest Plans in Good Tasks}
%omit $\ge 25\%$ States}

Given a STRIPS\(^1_1\) planning instance $\mathbf{p}=\langle V, A, s_I, s_G\rangle$ we may assume for simplicity that there is no action $a\in A$ with $\pre(a)=\overline{\eff(a)}$. Such an action can be replaced with two actions $(u,\eff(a))$, $(\overline{u},\eff(a))$ for some other variable $u$.

Furthermore, we may assume without loss of generality that for every literal $l\in L(V)$ there exists an action $a\in A$ with $\eff(a)=l$; for otherwise, in any plan for $\mathbf{p}$ there is at most one action affecting that variable. In this section we consider a mildly stronger assumption as follows.

\begin{definition}\label{def:good} 
We say that an instance $\mathbf{p}=\langle V, A, s_I, s_G\rangle$ is \emph{good} if for every variable $v \in V$ there exist two actions $a,b \in A$ with $\eff(a)=v$, $\eff(b)=\overline{v}$ and $\pre(a)\ne \overline{\pre(b)}$.
\end{definition}

Note that for example every instance with at least three actions affecting each variable is good. If an instance is not good, for some variable $v$ there are only two \emph{parallel} actions affecting $v$, namely $(l,v)$ and $(\overline{l}, \overline{v})$ for some literal $l$. %(bad instances are not treated here, but it seems that since effects on $v$ require effects on $l$, these variables could be ``glued'').

Let $Q_n(\mathbf{p})$ denote the directed graph on all $2^n$ possible states as vertices, and edges from states to all possible consecutive states. It is a spanning subgraph of the $n$-dimensional bidirected hypercube. Each variable $v\in V$ corresponds to one coordinate of the hypercube. Each action $a=(l_1,l_2)\in A$ induces directed edges $\{(s,s\oplus{l_2}) \mid  l_1,\overline{l_2}\in s\}\subseteq E(Q_n(\mathbf{p}))$ .  If we have edges between $s_1$ and $s_2$ in both directions, i.e. $(s_1,s_2),(s_2,s_1)\in E(Q_n(\mathbf{p}))$, we say that $\{s_1,s_2\}$ is a \emph{bidirectional} edge. Note that bidirectional edges are induced by two or more actions with effects on opposite literals of the same variable.

\begin{lemma}
\label{obs:bidir}
  Let $\mathbf{p}$ be a good instance on $n$ variables. For every variable $v$ the graph $Q_n(\mathbf{p})$ contains at least $2^{n-3}$ bidirectional edges in the coordinate $v$.
\end{lemma}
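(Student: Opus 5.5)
Since the instance is good, we fix a variable $v$ and take the two actions $a=(l_1,v)$ and $b=(l_2,\overline{v})$ supplied by Definition~\ref{def:good}, so $\pre(a)=l_1\ne\overline{l_2}=\overline{\pre(b)}$. By the simplifying assumption made at the start of this section, neither action has its precondition equal to the negation of its effect. The edge in coordinate $v$ between a state $s$ with $\overline{v}\in s$ and $s\oplus v$ is then bidirectional whenever $a$ fires along it upward and $b$ fires along it downward. Such an edge is determined by the literals of the other $n-1$ variables. We therefore count assignments $r$ of $V\setminus\{v\}$ for which, in the two states $r\cup\{\overline{v}\}$ and $r\cup\{v\}$, the precondition $l_1$ holds in the first and $l_2$ holds in the second. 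Any further actions on $v$ can only add bidirectional edges, so this count is a lower bound.

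Next we split on the variables underlying $l_1$ and $l_2$. First we show that $l_1$ is not on $v$. The literal $l_1=\overline{v}$ is excluded by the simplifying assumption, since $v=\eff(a)$. The literal $l_1=v$ would make $a$ inapplicable, because $a$ is only useful in states containing $\overline{v}$; we may discard such an action or note that the definition should be read with these degenerate actions removed. Symmetrically, $l_2$ is on some variable other than $v$. Let $u_1$ and $u_2$ denote the variables of $l_1$ and $l_2$.

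\emph{Case $u_1\ne u_2$.} The conditions fix two coordinates of $r$, so $2^{n-3}$ assignments remain, which gives exactly the bound.

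\emph{Case $u_1=u_2$.} Since $l_1\ne\overline{l_2}$, we have $l_1=l_2$, so only one coordinate is fixed and we obtain $2^{n-2}\ge 2^{n-3}$ edges.

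The only delicate step is the degenerate situation where a precondition lies on $v$ itself. Such actions either induce no edges or were eliminated by the earlier normalization. So we make explicit that the actions in the good pair may be chosen with preconditions on variables different from $v$, and then the counting is immediate. We also assume $n\ge 3$ so that two distinct other coordinates exist; for smaller $n$ the bound $2^{n-3}<1$ is satisfied by $\lceil\cdot\rceil$ trivially in the $u_1=u_2$ case.
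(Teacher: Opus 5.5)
Your proof is correct and is the same counting argument as the paper's, which counts states $s$ with $l_1,l_2,v\in s$ and splits on $l_1\ne l_2$ versus $l_1=l_2$ (equivalent to your split on $u_1\ne u_2$, since $l_1\ne\overline{l_2}$), giving $2^{n-3}$ and $2^{n-2}$ bidirectional edges respectively. Your explicit exclusion of preconditions on $v$ itself is a genuine refinement: the paper assumes it tacitly, and the lemma as literally stated actually fails if a no-op action such as $(v,v)$, which induces no edges, is allowed to serve as a witness in Definition~\ref{def:good}.
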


\noindent
{\bf Proof.}
  Let $(l_1,v)$ and $(l_2,\overline{v})$ be two actions with $l_1 \ne \overline{l_2}$. Since the literals $l_1$ and $l_2$ are consistent, the set $\{\{s,s\oplus \overline{v}\} \mid l_1,l_2,v\in s\}$ contains $2^{n-3}$ if $l_1 \ne l_2$, and $2^{n-2}$ if $l_1=l_2$, bidirectional edges in the coordinate $v$.\hfill $\Box$

\begin{lemma}
\label{cor:bidir} 
  Let $\mathbf{p}$ be a good instance on $n$ variables. Then $Q_n(\mathbf{p})$ contains at least $2^{n-2}$ vertices such that each of them is incident with at least $n/4$ bidirectional edges. %\footnote{
  %I rather chose to keep it simple.
\end{lemma}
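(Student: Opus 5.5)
We prove Lemma~\ref{cor:bidir} by double counting the incidences between vertices and bidirectional edges, and then averaging. By Lemma~\ref{obs:bidir}, every coordinate $v$ carries at least $2^{n-3}$ bidirectional edges. Summing over the $n$ coordinates gives at least $n2^{n-3}$ bidirectional edges in $Q_n(\mathbf{p})$. Each edge has two endpoints, so the degree sum of the undirected graph $B$ of bidirectional edges satisfies
\[
\sum_{s} \deg_B(s) \;\ge\; 2\cdot n 2^{n-3} \;=\; n2^{n-2},
\]
where the sum runs over all $2^n$ states. The average degree in $B$ is therefore at least $n/4$.

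Next we turn this average into the claimed count by a Markov-type argument. Let $X$ be the set of vertices with $\deg_B(s)\ge n/4$. Every vertex has $\deg_B(s)\le n$, since $B$ is a subgraph of the hypercube. Vertices outside $X$ contribute strictly less than $n/4$ each. This gives
\[
n2^{n-2}\le \sum_s \deg_B(s) < |X|\cdot n + (2^n-|X|)\, n/4 .
\]
Rearranging yields $|X|\cdot \tfrac{3n}{4} > n2^{n-2}-n2^{n-2}=0$. So this crude version only shows that $X$ is nonempty, which is far from enough.

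That failure is the main obstacle: the averaging above is too weak as it stands. The fix is to use more structure than the total count. I would use the explicit subcube description from the proof of Lemma~\ref{obs:bidir}. For each coordinate $v$, the bidirectional edges certified there are all edges in direction $v$ inside the subcube fixed by $l_1,l_2$, which fixes at most two further coordinates. Consequently, every vertex $s$ satisfying $l_1^{(v)},l_2^{(v)}\in s$ is incident to a certified bidirectional $v$-edge. This holds whichever value $s$ takes on $v$ itself, because the edge $\{s,s\oplus\overline{v}\}$ or its reverse is present. The condition fixes at most two literals, so a uniformly random state satisfies it with probability at least $1/4$.

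With this in hand, define $D(s)$ as the number of coordinates $v$ whose condition holds at $s$; then $\deg_B(s)\ge D(s)$. Linearity of expectation gives $\mathbb{E}[D]\ge n/4$. The remaining step, and the hard one, is to lower-bound $\Pr[D\ge n/4]$ by $1/4$. Using only $D\le n$ and the first-moment bound, we get just $\Pr[D\ge n/4]\ge$ roughly $0$ again.

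I would therefore refine the choice. In Definition~\ref{def:good} one is free to choose the pair of actions for each $v$. The first thing I would try is a pairing or complementation argument: map $s\mapsto \overline{s}$, or flip a suitable block of coordinates, and show that for each $s$, $D(s)+D(s')\ge n/2$ for some involution $s\mapsto s'$. Then at least half of each pair, hence at least $2^{n-1}\ge 2^{n-2}$ vertices, have $D\ge n/4$. If a clean involution is not available, I would settle for choosing the conditions so that the constraints form a graph on literals and bounding $D$ via the fraction of $s$ that satisfy many edge constraints simultaneously. This step is where the real work lies.
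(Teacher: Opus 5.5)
\paragraph{Review.} You are right that plain averaging fails, and the paper's own proof makes exactly that inference. From at least $n2^{n-2}$ incidences with at most $n$ per vertex, it concludes that these are ``distributed on at least $2^{n-2}$ vertices with at least average degree.'' That only shows that $2^{n-2}$ vertices carry \emph{some} bidirectional edge.

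Your proposal does not close the gap. Everything is reduced to $\Pr[D\ge n/4]\ge 1/4$, which you leave open. The involution you suggest cannot exist, because it would give $2^{n-1}$ qualifying vertices. Consider the instance in which every $v\notin\{x_1,x_2\}$ has exactly the actions $(x_1,v),(x_2,\bar v)$, while $x_1$ and $x_2$ use the pairs $\{x_2,x_3\}$ and $\{x_1,x_3\}$. For $n\ge 5$ it has exactly $2^{n-2}$ such vertices, namely those with $x_1,x_2\in s$.

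More seriously, the missing inequality is false, and with it the lemma. Take variables $x_1,\dots,x_4$ and $27m$ further variables. Each further variable $v$ gets exactly the two actions $(l_1,v),(l_2,\bar v)$, where the pair $\{l_1,l_2\}$ runs over the following multiset (with $j\in\{2,3,4\}$):
\begin{itemize}
\item each pair $\{\bar x_i,\bar x_j\}$ with $2\le i<j\le 4$, taken $4m$ times;
\item each pair $\{\bar x_1,\bar x_j\}$, taken $m$ times;
\item each pair $\{x_i,x_j\}$ with $2\le i<j\le 4$, and each pair $\{x_1,x_j\}$, taken $2m$ times.
\end{itemize}
Give each $x_i$ the actions $(y,x_i),(y,\bar x_i)$ for some literal $y$ of a further variable. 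The instance is good. Since each variable has only these two actions, $s$ has a bidirectional $v$-edge iff $l_1,l_2\in s$, so $\deg_B(s)=D(s)$ up to the four base coordinates.

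Now count over the $16$ patterns of $(x_1,\dots,x_4)$:
\begin{itemize}
\item the patterns $0000,1000,1111$ receive $15m,12m,12m$;
\item the patterns $0011,0101,0110$ receive $3m$;
\item the other ten patterns receive exactly $6m$.
\end{itemize}
As a check, the total is $108m=4\cdot 27m$. Hence every $s$ outside the three special patterns has at most $6m+4<(27m+4)/4=n/4$ bidirectional edges once $m\ge 5$. So at most $3\cdot 2^{n-4}<2^{n-2}$ vertices qualify; for example, $n=139$ already gives a counterexample.

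Such weights exist, by LP duality, because no distribution on $\{0,1\}^4$ with all two-coordinate marginals uniform avoids $\{0000,1000,1111\}$.

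What your first-moment setup does give is the reverse-Markov bound $\Pr[\deg_B\ge cn]\ge (1/4-c)/(1-c)$ for $c<1/4$. For example, at least $2^n/7$ vertices have at least $n/8$ bidirectional edges. The lemma, and the constants in Theorem~\ref{thm:omit}, would have to be restated along those lines.
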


\noindent
{\bf Proof.}
  There are altogether at least $n2^{n-3}$ bidirectional edges, which means at least $n2^{n-2}$ incidences and average degree at least $n/4$. On the other hand, since each vertex has at most $n$ incidences, they are distributed on at least $2^{n-2}$ vertices with at least average degree. \hfill $\Box$

\medskip

The number $2^{n-2}$ can be perhaps slightly improved since if all incidences are on this number of vertices, they form a subcube of dimension $n-2$, but, then, the degree is only $n-2$ not $n$.

\begin{theorem}
\label{thm:omit}
  Let $\mathbf{p}$ be a good instance on $n$ variables and let $\omega$ be a shortest plan for $\mathbf{p}$. If $n \ge 16$, then $\omega$ omits at least $2^{n-2}$ states, i.e. it has length at most $3\cdot2^{n-2}$. If $12\le n <16$, then $\omega$ omits at least $2^{n-3}$ states.
\end{theorem}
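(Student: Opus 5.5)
The plan is to exploit the rigidity of shortest paths with respect to bidirectional edges. Since $\omega$ is a shortest plan, the states it visits are pairwise distinct and form a shortest directed path $P$ in $Q_n(\mathbf{p})$ from $s_I$ to $s_G$.

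\textbf{Step 1 (key observation).} Let $\{u,w\}$ be a bidirectional edge with both endpoints on $P$, say $u$ at position $i$ and $w$ at position $j>i$. Since the edge $(u,w)$ exists, we could jump from $u$ directly to $w$, so minimality forces $j=i+1$. Hence, in the subgraph $B$ of bidirectional edges, every vertex of $P$ has at most two $B$-neighbours on $P$ (its predecessor and its successor). Consequently the visited vertices span at most about $|P|\le 2^n$ bidirectional edges.

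\textbf{Step 2 (double counting).} Let $O$ be the set of omitted states, so $|P|=2^n-|O|$. I split the bidirectional edges into those inside $P$ and those meeting $O$:
\begin{itemize}
\item by Step 1, at most $|P|$ edges lie inside $P$;
\item each omitted vertex meets at most $n$ edges, so at most $n|O|$ edges meet $O$.
\end{itemize}
By Lemma~\ref{obs:bidir} there are at least $n2^{n-3}$ bidirectional edges in total, so
\[
n2^{n-3}\le 2^n-|O|+n|O|.
\]
This gives $|O|\ge (n2^{n-3}-2^n)/(n-1)$. The same count can be run through Lemma~\ref{cor:bidir} as well: restrict to the at least $2^{n-2}$ vertices of $B$-degree at least $n/4$. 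Each such vertex that lies on $P$ has at least $n/4-2$ of its $B$-neighbours in $O$. Any such vertex not on $P$ is itself in $O$.

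\textbf{Step 3 (the thresholds).} I would then tune the constants so that the bound becomes $2^{n-3}$ once $n\ge 12$ and $2^{n-2}$ once $n\ge 16$. The $n\ge 16$ case needs the bound to reach $2^{n-2}$, and the length bound $3\cdot 2^{n-2}$ then follows directly.

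\textbf{Main obstacle.} The crude estimate above is too weak. It only yields roughly $2^{n-3}$ asymptotically, and even less for small $n$. To close the gap I would first sharpen the count of bidirectional edges per coordinate. When the two witnessing actions for $v$ in Definition~\ref{def:good} have $l_1=l_2$, Lemma~\ref{obs:bidir} already gives $2^{n-2}$ edges. Otherwise one has a matching inside a subcube of codimension $3$, and I would combine several coordinates, or use additional actions affecting $v$, to improve the total. I would also replace the bound of $n$ on the $B$-degree of an omitted vertex by a sharper local bound: count only edges to $P$-vertices of high $B$-degree. Then I would optimize the trade-off between $|S\setminus P|$ and $|S\cap P|\,(n/4-2)/n$, where $S$ is the set of high-degree vertices from Lemma~\ref{cor:bidir}. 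The thresholds $12$ and $16$ presumably appear exactly where $n/4-2$ becomes large enough (namely $1$ and $2$) for this trade-off to give the claimed fractions.
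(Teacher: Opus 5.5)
\textbf{Genuine gap: the proposal does not reach either bound.} Your Steps 1 and 2 are correct and match the paper's starting point. That is the shortcut observation (a visited state can have only its predecessor and successor as visited bidirectional neighbours) and the restriction to the set $S$ of Lemma~\ref{cor:bidir}. But your own count gives only $|O|\ge 2^{n-3}(n-8)/(n-1)$, which is below both claimed bounds. Step~3 and the ``main obstacle'' paragraph state intentions, not arguments.

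Sharpening Lemma~\ref{obs:bidir} cannot help in general. Exactly $2^{n-3}$ bidirectional edges per coordinate is attained when each variable is changed by exactly two actions whose preconditions lie on two distinct other variables. The real loss is the bound that an omitted state meets up to $n$ bidirectional edges. In other words, you have no control over how many visited states charge the same omitted state.

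\textbf{The missing idea.} The paper applies the shortcut argument a second time, to detours of length two. Suppose visited states $s,s'$ at positions $i<j$ on $\omega$ have a common bidirectional neighbour $x$. Then $s\to x\to s'$ forces $j\le i+2$. Since $s,s'$ differ in two coordinates, $j=i+2$, and the state at position $i+1$ is one of the two common hypercube neighbours of $s$ and $s'$. Two consequences follow:
\begin{itemize}
\item an omitted state is a bidirectional neighbour of at most two visited states;
\item $s_i$ and $s_{i+2}$ share at most one omitted bidirectional neighbour.
\end{itemize}
Charge each doubly counted omitted state to the earlier of its two visited neighbours. Then $S\cap P$ has at least $|S\cap P|(n/4-3)$ distinct omitted bidirectional neighbours. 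This is where $n\ge 16$, i.e.\ $n/4-3\ge1$, comes from.

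\textbf{A caution if you follow the paper's route.} The paper then \emph{adds} the $k=|S\setminus P|$ omitted vertices of $S$ to get $k+(2^{n-2}-k)(n/4-3)\ge 2^{n-2}$. This addition tacitly assumes that the omitted neighbours just counted lie outside $S$, and the paper does not argue this. For example, suppose almost all coordinates have the same witnessing preconditions $l_1,l_2$. Then $S$ is the subcube $\{s: l_1,l_2\in s\}$, and essentially all those neighbours lie in $S$. The same caveat applies to the paper's count for $12\le n<16$.

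If you combine the two quantities as a trade-off, as you propose, the same ingredients give
\[
|O|\ge\max\{k,(2^{n-2}-k)c\}\ge 2^{n-2}c/(1+c), \qquad c=n/4-3.
\]
This is at least $2^{n-3}$ for $n\ge16$, but never $2^{n-2}$. So reaching the stated $2^{n-2}$ (length at most $3\cdot 2^{n-2}$) still needs that disjointness or a further idea.
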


\noindent
{\bf Proof.}
  Let $S$ be the set of vertices (states) in $Q_n(\mathbf{p})$ that are incident with at least $n/4$ bidirectional edges. We have that $|S|\ge 2^{n-2}$ by Lemma~\ref{cor:bidir}. For $s \in S$ let $N_{b}(s)$ denote the set of neighbors of $s$ over the bidirectional edges.
  
  We observe that if $s \in S$ is visited by $\omega$, then $\omega$ visits at most two vertices from $N_b(S)$, namely the predecessor and the successor of $s$ on $\omega$ can both be from $N_b(S)$. Other vertices from $N_b(s)$ cannot be visited s before nor after $s$ since otherwise the bidirectional edge would create a shortcut on $\omega$.
  Furthermore, if $s_1,s_2 \in S$ are both visited by $\omega$, say $s_1$ before $s_2$, and $N_b(s_1)$ and $N_b(s_2)$ intersect in two vertices, then one of them is also visited by $\omega$, namely as a successor of $s_1$ and a predecessor of $s_2$. Therefore, if $k$ denotes the number of vertices of $S$ omitted by $\omega$, the total number of omitted vertices is at least
  $$k+(2^{n-2}-k)(n/4-3)\ge 2^{n-2} \quad \text{if }n\ge 16.$$
   
  For $12\le n<16$, if $s \in S$ is visited by $\omega$, $|N_b(S)|=3$, and both the predecessor and the successor on $\omega$ are from $N_b(S)$, then it can share the remaining neighbor from $N_b(S)$ only with one vertex at distance two on $\omega$. So these shared neighbors are counted only in pairs and we obtain at least
  $$k+(2^{n-2}-k)(n/4-2)-(2^{n-2}-k)/2$$
  omitted vertices, which has its minimum $2^{n-3}$ at $k=0$.
\hfill $\Box$

\section{Literal Graph}

Given a STRIPS\(_{1}^{1}\) planning task \(\langle V,A\rangle\), a literal graph is a directed graph \(G=\langle L(V),A\rangle\) with literals in \(L(V)\) as nodes and actions in \(A\) as edges. Note that in STRIPS\(_{1}^{1}\), we can write each action \(a\in A\) on the form \(a=\langle\mathrm{pre}(a),\mathrm{eff}(a)\rangle=\langle l_{1},l_{2}\rangle\), where \(l_{1}\) and \(l_{2}\) are literals in \(L(V)\), i.e. \(\mathrm{pre}(a)=l_{1}\) and \(\mathrm{eff}(a)=l_{2}\) are literals instead of literal sets and \(a\) can be interpreted as a directed edge between \(l_{1}\) and \(l_{2}\) in \(G\).

Consider a planning instance \(\bm{p}=\langle V,A,s_{I},s_{G}\rangle\) and a plan \(\omega=\langle a_{1},\ldots,a_{k}\rangle\). For each \(l\in L(V)\) and \(i\in[[0,k]]\), we recursively construct a subplan \(\omega_{l,i}\) of \(\omega\) as follows as 

\noindent
$\omega_{l,i}=$
\[
\left\{\begin{array}{ll}
\langle\omega_{\mathrm{pre}(a_{j}),j-1},a_{j}\rangle & \exists j\in\llbracket i\rrbracket\textrm{ s.t. }\mathrm{eff}(a_{j})=l\ \wedge  \\
& \ \ \mathrm{eff}(a_{\ell})\neq l,\forall\ell\in\llbracket j+1,i\rrbracket,\\
\langle\rangle & l\in s_{I}\ \wedge\, \not{\exists}j\in\llbracket i\rrbracket\textrm{ s.t. }\mathrm{eff}(a_{j})=l,\\
\bot & l\notin s_{I}\ \wedge\,\not{\exists} j\in\llbracket i\rrbracket\textrm{ s.t. }\mathrm{eff}(a_{j})=l.
\end{array}\right.
\]

Here, \(\langle\omega_{\mathrm{pre}(a_{j}),j-1},a_{j}\rangle\) denotes the subplan that results from appending action \(a_{j}\) to the subplan \(\omega_{\mathrm{pre}(a_{j}),j-1}\), \(\langle\,\rangle\) denotes the empty plan, and \(\bot\) denotes an illegal plan. Intuitively, if there exists an action in \(a_{1},\ldots,a_{i}\) that causes \(l\) to be true, the subplan \(\omega_{l,i}\) ends with the last such action \(a_{j}\), and begins with the subplan \(\omega_{\mathrm{pre}(a_{j}),j-1}\) that makes the precondition of \(a_{j}\) true prior to time step \(j\). If no such action exists, the subplan \(\omega_{l,i}\) is empty if \(l\in s_{I}\), and otherwise illegal since the plan \(\omega\) does not make \(l\) true by time step \(i\). It is simple to compute each \(\omega_{l,i}\) using dynamic programming.

%\section{Examples and Overlap Trees}

In the following, we show several examples of literal graphs and overlap trees.
with a number of variables in \([[4,7]]\).
On the edges, each example provides the indices of the actions in the obtained (shortest) plan and the subplans that end in a goal literal are colored. 

It is relatively easy to see that subplans can have common prefixes, but once they diverge, they remain separate throughout the solution plan. Hence, we organize the subplans in an \emph{overlap tree} structure. There are some interesting observations we can make from the examples. Once two subplans diverge, eventually the subplans operate on distinct sets of actions (edges), which are colored in the literal graph and in the overlap trees. There may be a transition period during which a subplan selects actions with the goal of reaching its particular subset of actions. A subplan may cycle multiple times over its subset of actions. 
%
%Even though a subplan may cycle multiple times over its subset of actions, the number of such cycles is limited by the number of other subplans and the number of actions in the subset. 
%
%If we were to prove that the subset of actions of subplans is always different, this could imply a polynomial bound on the total number of actions.

An example for \(n=4\) with plan length 15 is as follows.

\includegraphics[width=7.5cm]{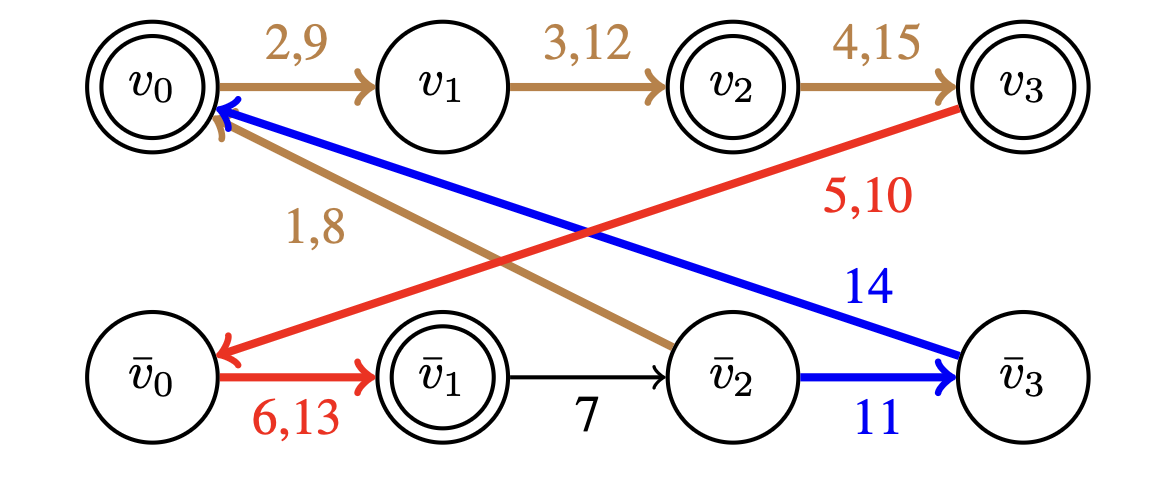}

\includegraphics[width=8.5cm]{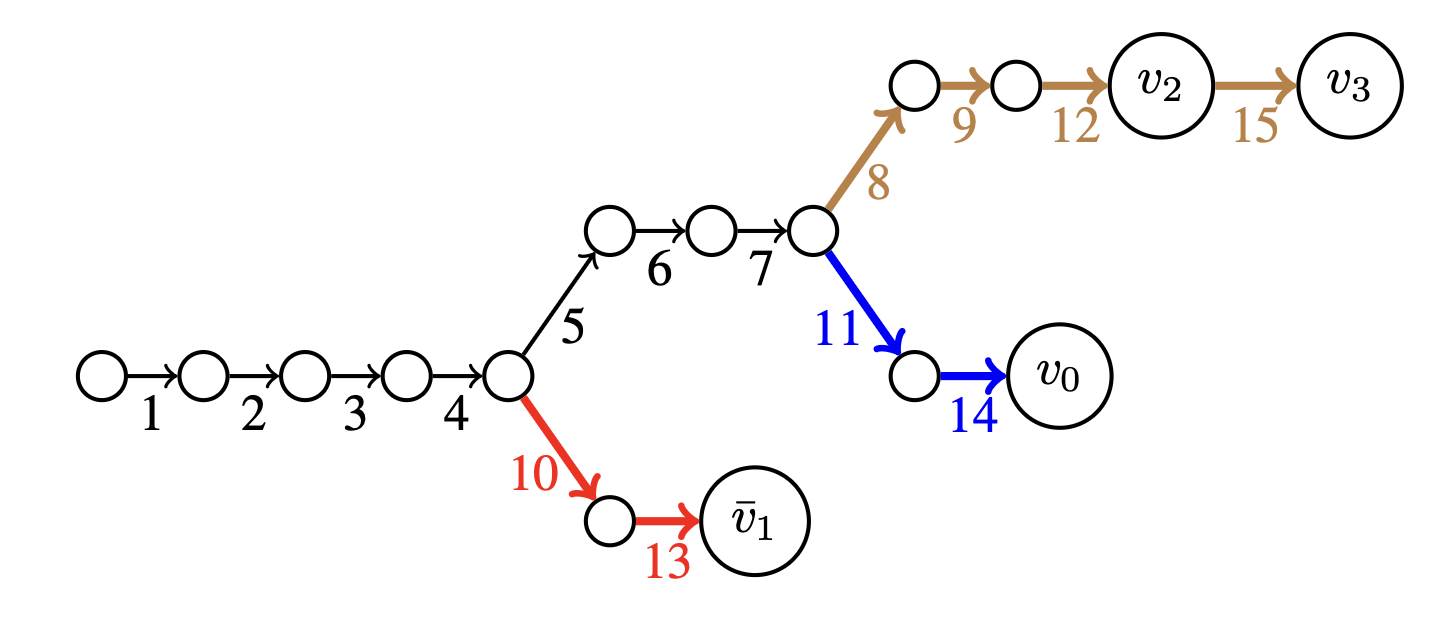}
%\begin{itemize}
%    \item Path for \(v_{0}\): 1,2,3,4,5,6,7,\textbf{11,14}
%    \item Path for \(\bar{v}_{1}\): 1,2,3,4,\textbf{10,13}
%    \item Path for \(v_{3}\): 1,2,3,4,5,6,7,8,9,12,\textbf{15}
%\end{itemize}

%\pagebreak

{Example for \(n=5\) with plan length 29}

\includegraphics[width=7.5cm]{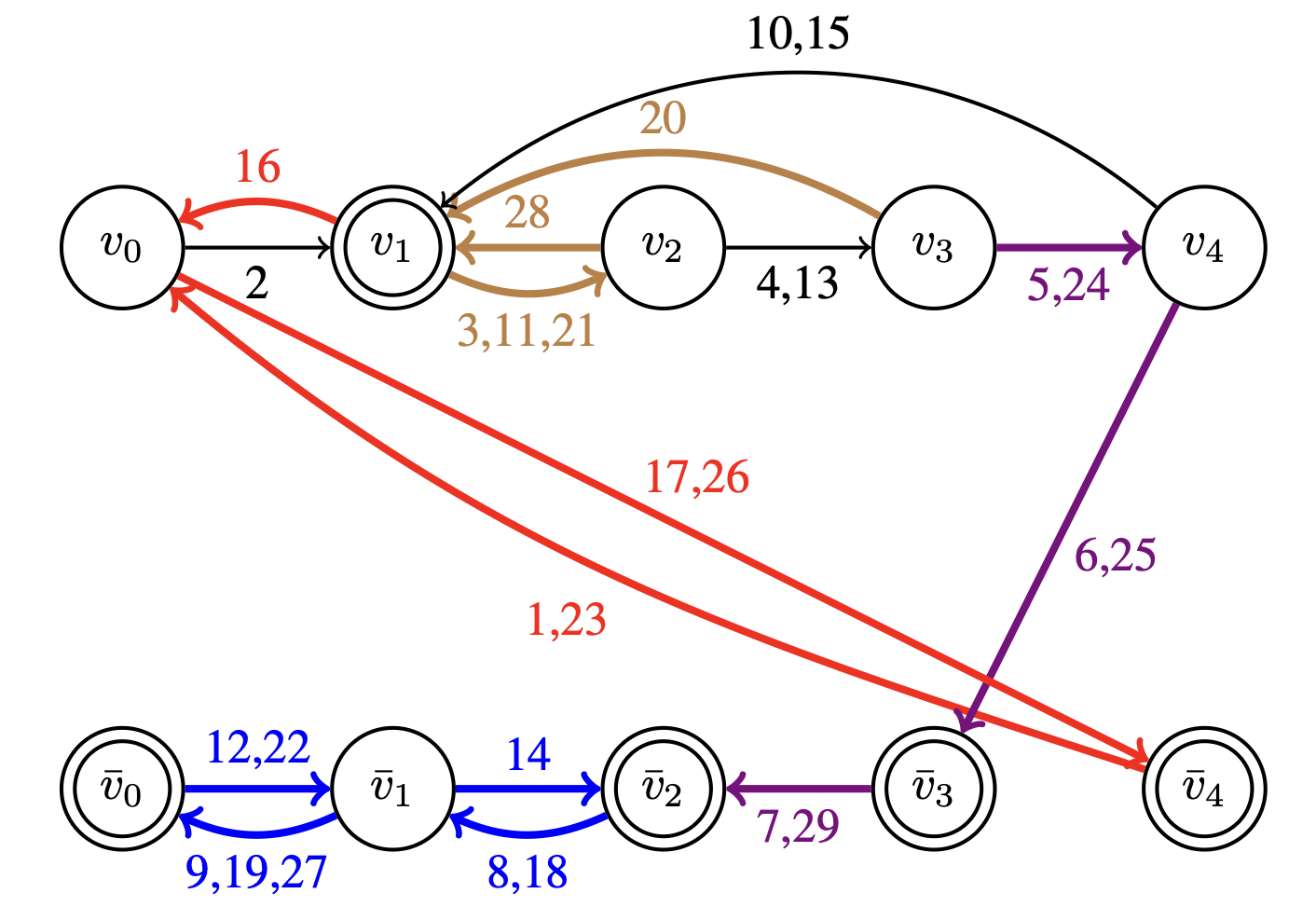}
\includegraphics[width=8.5cm]{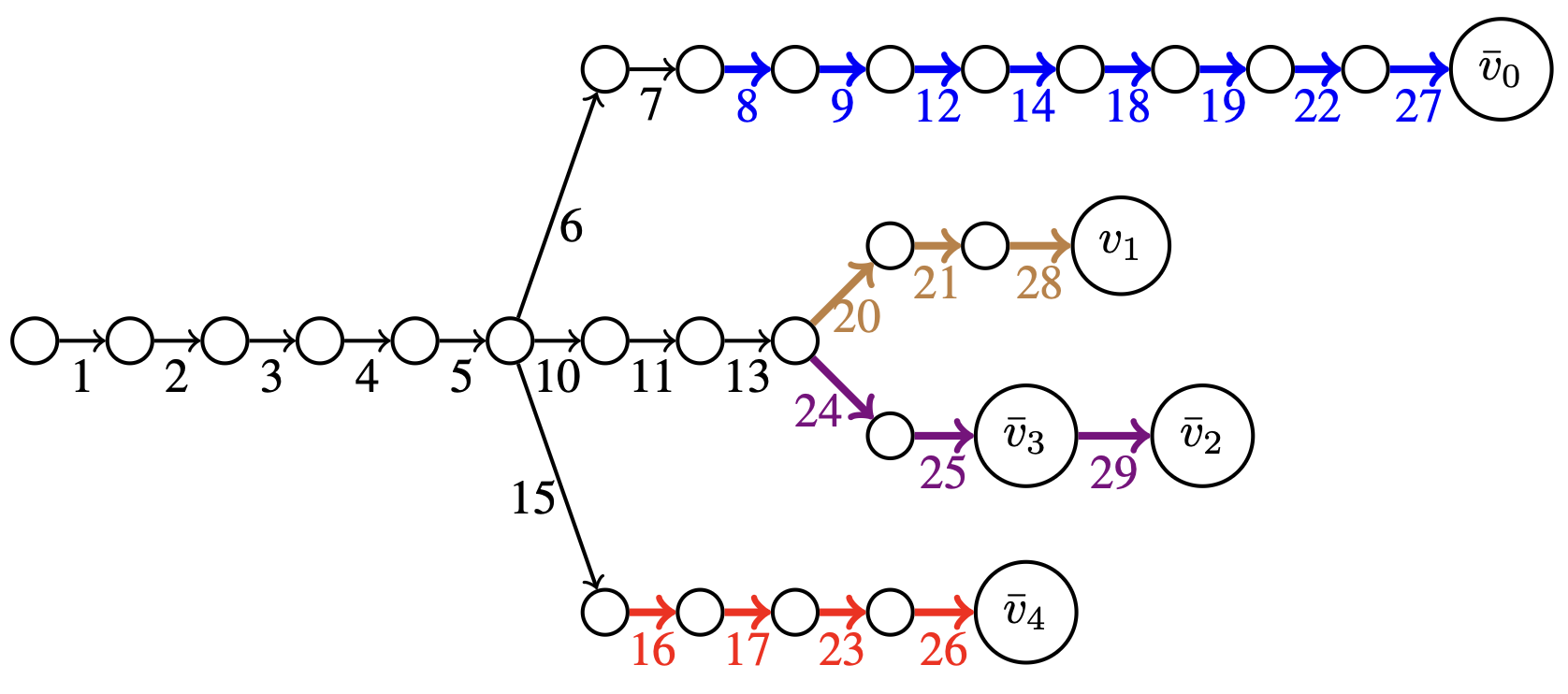}

%\begin{itemize}
%    \item Path for \(\bar{v}_0\): 1,2,3,4,5,6,7,8,9,12,14,18,19,22,27
%    \item Path for \(v_1\): 1,2,3,4,5,10,11,13,20,21,28
%    \item Path for \(\bar{v}_2\): 1,2,3,4,5,10,11,13,24,25,29
%    \item Path for \(\bar{v}_3\): 1,2,3,4,5,10,11,13,24,25
%    \item Path for \(\bar{v}_4\): 1,2,3,4,5,15,16,17,23,26
%\end{itemize}

%\pagebreak
 
{Example for \(n=6\) with plan length 38}

\includegraphics[width=8.5cm]{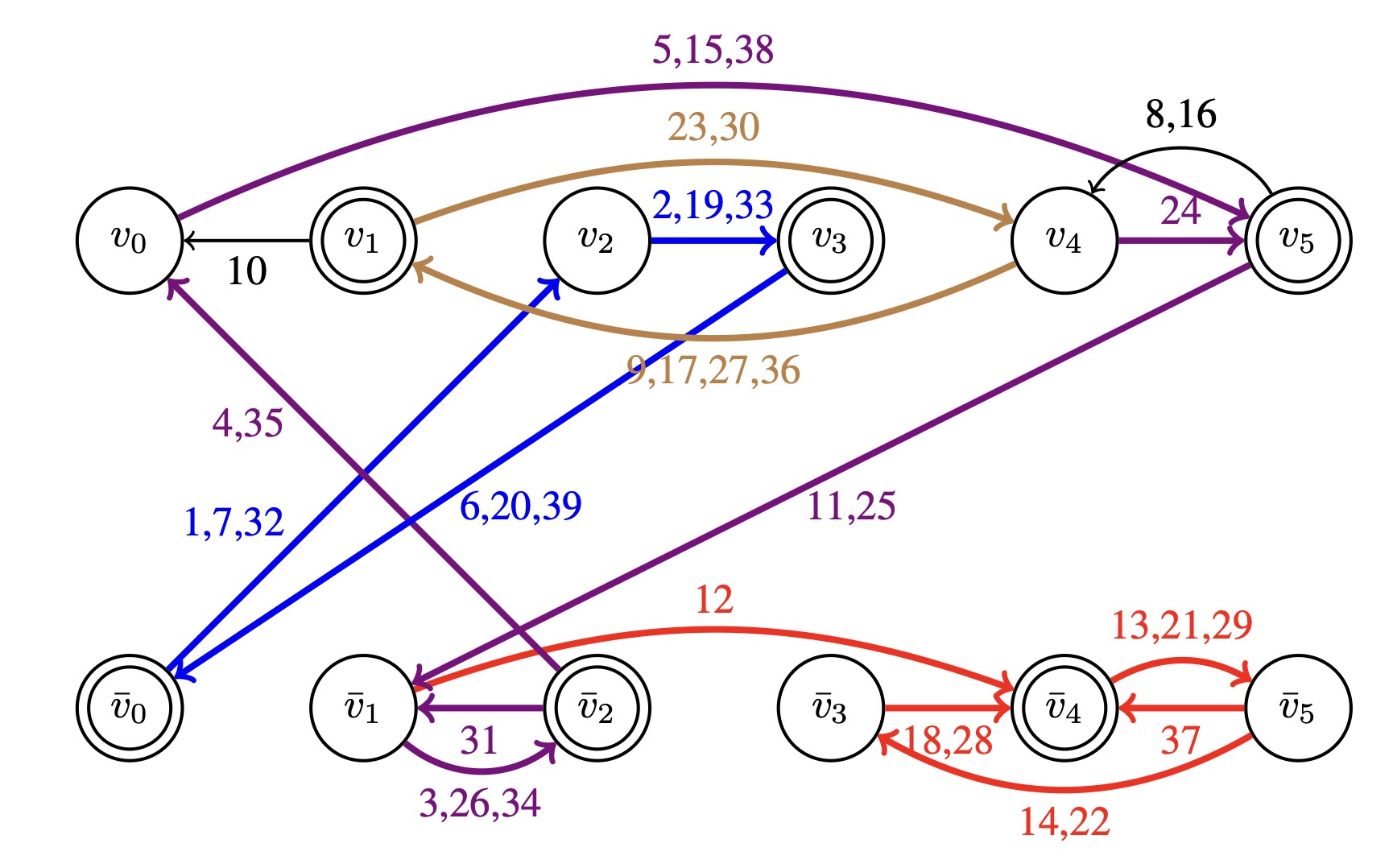} 

%\pagebreak 

\includegraphics[width=8.5cm]{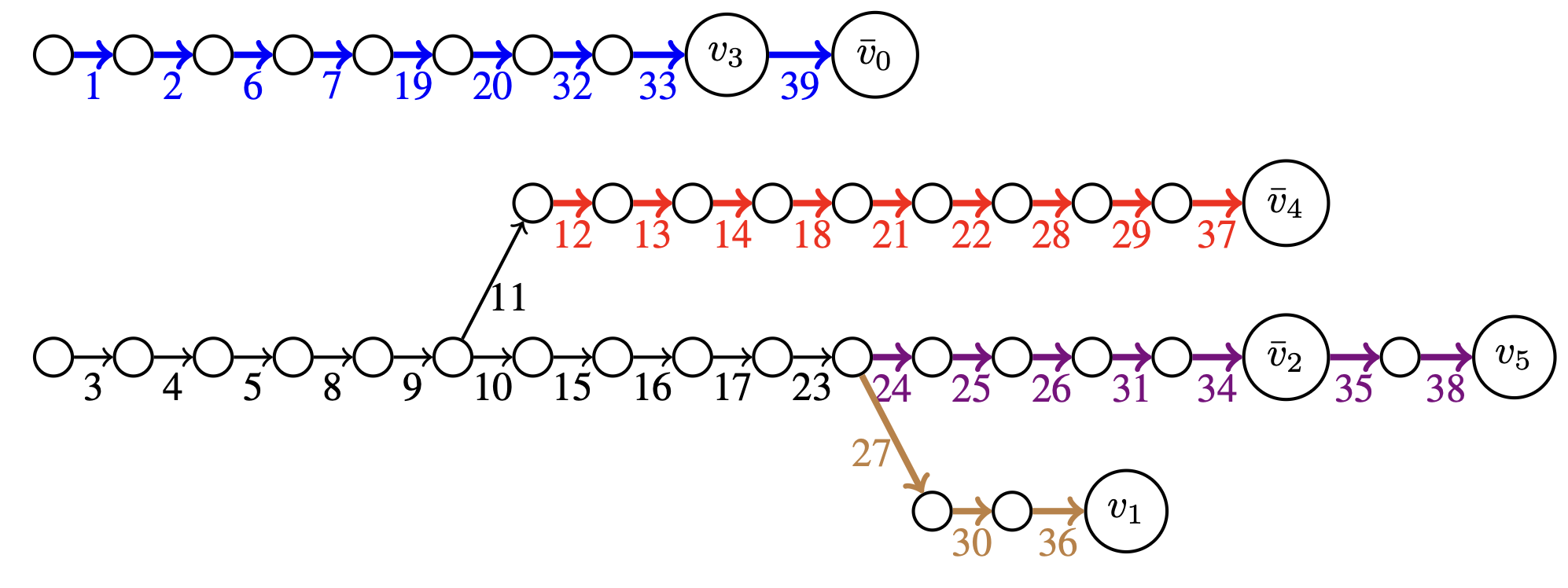}

%\begin{itemize}
%    \item Path for \(\bar{v}_0\): 1,2,6,7,19,20,32,33,39
%    \item Path for \(v_1\): 3,4,5,8,9,10,15,16,17,23,27,30,36
%    \item Path for \(\bar{v}_2\): 3,4,5,8,9,10,15,16,17,23,24,25,26,31,34
%    \item Path for \(v_3\): 1,2,6,7,19,20,32,33
%    \item Path for \(\bar{v}_4\): 3,4,5,8,9,10,11,12,13,14,18,21,22,28,29,37
%    \item Path for \(v_5\): 3,4,5,8,9,10,15,16,17,23,24,25,26,31,34,35,38
%\end{itemize}

{Example for \(n=7\) with plan length 50}

\includegraphics[width=8.5cm]{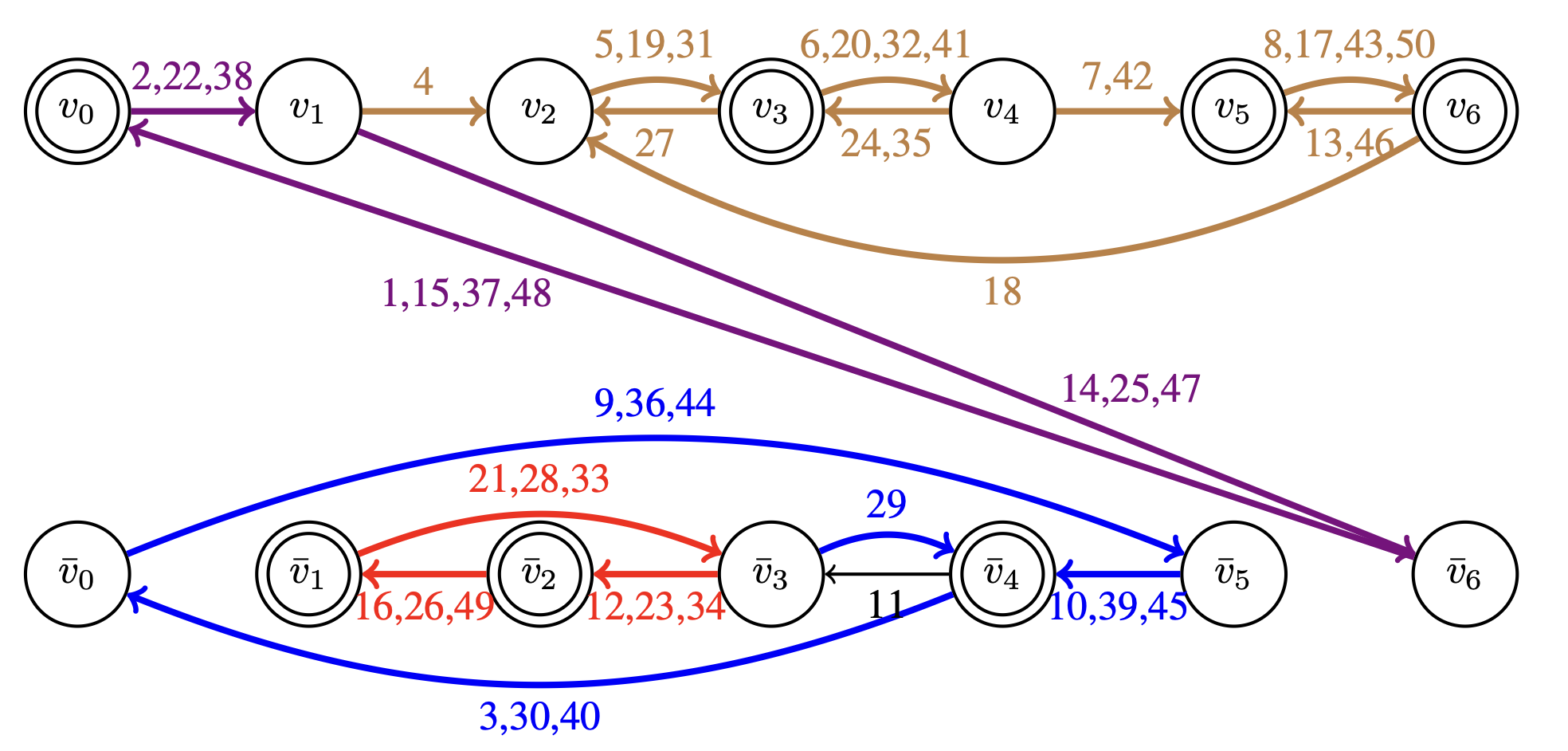}
\includegraphics[width=9cm]{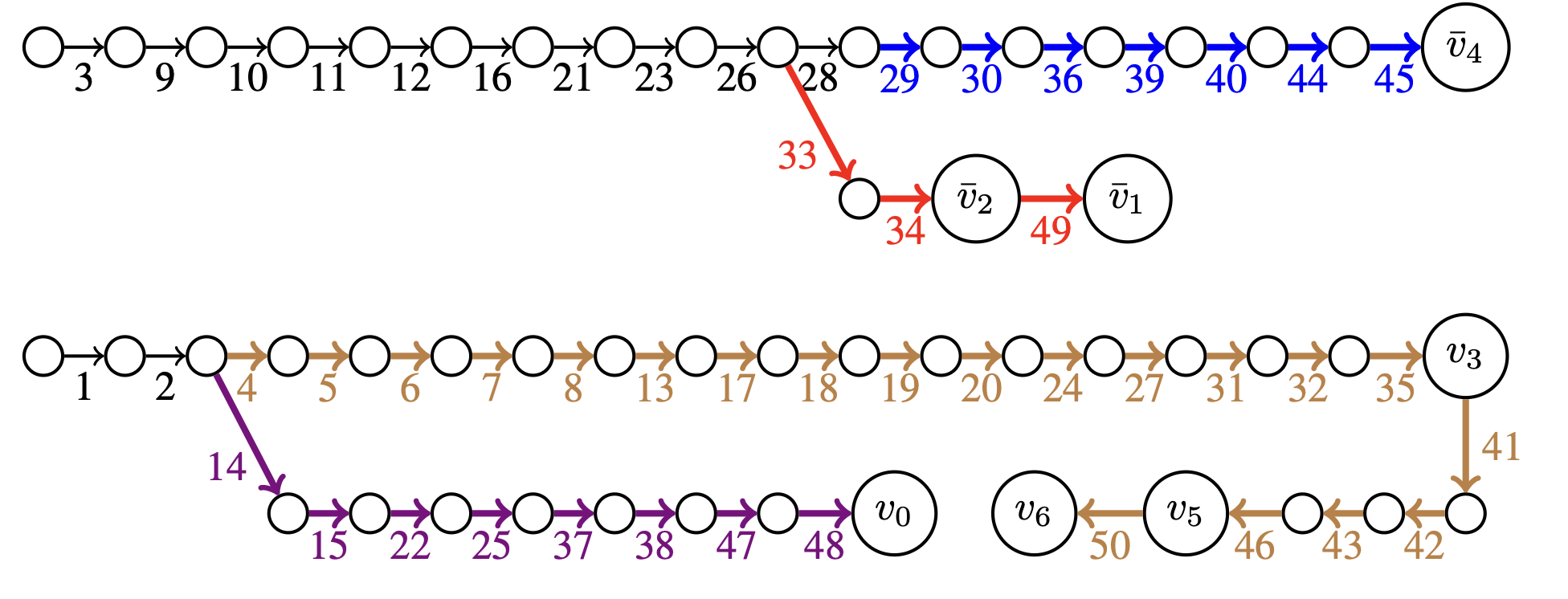}

%\begin{itemize}
%    \item Path for \(v_0\): 1,2,14,15,22,25,37,38,47,48
%    \item Path for \(\bar{v}_1\): 3,9,10,11,12,16,21,23,26,33,34,49
%    \item Path for \(\bar{v}_2\): 3,9,10,11,12,16,21,23,26,33,34
%    \item Path for \(v_3\): 1,2,4,5,6,7,8,13,17,18,19,20,24,27,31,32,35
%    \item Path for \(\bar{v}_4\): 3,9,10,11,12,16,21,23,26,28,29,30,36,39,40,44,45
%    \item Path for \(v_5\): 1,2,4,5,6,7,8,13,17,18,19,20,24,27,31,32,35,41,42,43,46
%    \item Path for \(v_6\): 1,2,4,5,6,7,8,13,17,18,19,20,24,27,31,32,35,41,42,43,46,50
%\end{itemize}
 
\pagebreak

\begin{figure}[t]
    \centering
\begin{tikzpicture}[scale=0.4,
    % Define styles for the nodes
    circnode/.style={circle, draw, fill=blue!20, minimum size=8mm},
    dcircnode/.style={double, circle, draw, fill=blue!20, minimum size=8mm},
    sqnode/.style={rectangle, draw, fill=green!20, minimum size=1mm},
    edge/.style = {line width = 1.3pt,->,> = latex'} 
]
\begin{scope}[node distance=20mm, yshift=0cm]
    % First set of 4 circular nodes
    \node[circnode, "$S^1$" label position=left] (c1a) at (0, 14) {P1};
    \node[dcircnode,"$G^2$","$S^2$" label position=left] (c2a) at (0, 10) {P2};
    \node[circnode, "$S^3$" label position=left] (c3a) at (0, 6) {P3};
    \node[circnode, "$S^4$" label position=left](c4a) at (0, 2) {P4};
\end{scope}

\begin{scope}[node distance=20mm, xshift=16cm]
    % Second set of 4 circular nodes
    \node[dcircnode,"$G^1$" label position=right] (c1b) at (0, 14) {P5};
    \node[circnode] (c2b) at (0, 10) {P6};
    \node[dcircnode,"$G^3$" label position=right] (c3b) at (0, 6) {P7};
    \node[dcircnode,"$G^4$" label position=right] (c4b) at (0, 2) {P8};
\end{scope}

% Define a scope for the second partition (square nodes)
% We'll place 9 square nodes on the right side

\begin{scope}[node distance=20mm, xshift=8cm, yshift=8cm]
    \node[sqnode] (s1) at (0, 8) {$T1$};
    \node[sqnode] (s2) at (0, 6) {$T2$};
    \node[sqnode] (s3) at (0, 4) {$T3$};
    \node[sqnode] (s4) at (0, 2) {$T4$};
    \node[sqnode] (s5) at (0, 0) {$T5$};
    \node[sqnode] (s6) at (0, -2) {$T6$};
    \node[sqnode] (s7) at (0, -4) {$T7$};
    \node[sqnode] (s8) at (0, -6) {$T8$};
    \node[sqnode] (s9) at (0, -8) {$T9$};
\end{scope}

% Draw edges between nodes from different partitions
% (circular nodes connect only to square nodes)

% Connections from the first group of circular nodes

% --- undir edges

\draw[edge,blue] (s1) to (c1a); 
\draw[edge,blue] (c1b) to (s1); 

\draw[edge,blue] (c1a) to (s2);
\draw[edge,blue] (s2) to (c1b);

\draw[edge,blue] (c1a) to (s3);
\draw[edge,blue] (s3) to (c1b);

\draw[edge,blue] (c2a) to (s4);
\draw[edge,blue] (s4) to (c2b);

\draw[edge,blue] (c2b) to (s5);
\draw[edge,blue] (s5) to (c2a);

\draw[edge,blue] (c3a) to (s6);
\draw[edge,blue] (s6) to (c3b);
\draw[edge,blue] (c3b) to (s7);
\draw[edge,blue] (s7) to (c3a);

\draw[edge,blue] (c4a) to (s8);
\draw[edge,blue] (s8) to (c4b);
\draw[edge,blue] (c4b) to (s9);
\draw[edge,blue] (s9) to (c4a);

% ------ bidir edges

%\draw[edge,red] (c4a) to (s1);
%\draw[edge,red] (s1) to (c4a);
%\draw[edge,red] (c4b) to (s2);
%\draw[edge,red] (s2) to (c4b);
%\draw[edge,red] (c3b) to (s3);
%\draw[edge,red] (s3) to (c3b);

%\draw[edge,red] (c1a) to (s4);
%\draw[edge,red] (s4) to (c1a);
%\draw[edge,red] (c1b) to (s5);
%\draw[edge,red] (s5) to (c1b);

%\draw[edge,red] (c2a) to (s6);
%\draw[edge,red] (s6) to (c2a);
%\draw[edge,red] (c2b) to (s7);
%\draw[edge,red] (s7) to (c2b);

%\draw[edge,red] (c3b) to (s9);
%\draw[edge,red] (s9) to (c3b);
%\draw[edge,red] (c3a) to (s8);
%\draw[edge,red] (s8) to (c3a);
% ------ bidir edges

\draw[edge,red] (c4b) to (s1);
\draw[edge,red] (s1) to (c4b);
\draw[edge,red] (c4a) to (s2);
\draw[edge,red] (s2) to (c4a);
\draw[edge,red] (c3a) to (s3);
\draw[edge,red] (s3) to (c3a);

\draw[edge,red] (c1b) to (s4);
\draw[edge,red] (s4) to (c1b);
\draw[edge,red] (c1a) to (s5);
\draw[edge,red] (s5) to (c1a);

\draw[edge,red] (c2b) to (s6);
\draw[edge,red] (s6) to (c2b);
\draw[edge,red] (c2a) to (s7);
\draw[edge,red] (s7) to (c2a);

\draw[edge,red] (c3a) to (s9);
\draw[edge,red] (s9) to (c3a);
\draw[edge,red] (c3b) to (s8);
\draw[edge,red] (s8) to (c3b);

%\draw[edge] (c4a) to (s8);

% Connections from the second group of circular nodes
%\draw (c1b) to (s1);
%\draw (c1b) to (s3);
%\draw (c2b) to (s2);
%\draw (c2b) to (s4);
%\draw (c3b) to (s5);
%\draw (c3b) to (s7);
%\draw (c4b) to (s6);
%\draw (c4b) to (s9);
\end{tikzpicture}
         \caption{
         Result of compilation scheme for the above $(n=4)$ STRIPS$^1_1$ example into safe conservative Petri-Net and into a cooperative MAPF, the literals represent nodes that correspond to the literal graph, they are negated on the left-hand side nad positive on the right-hand side. }
    \label{fig:mapping}
\end{figure}
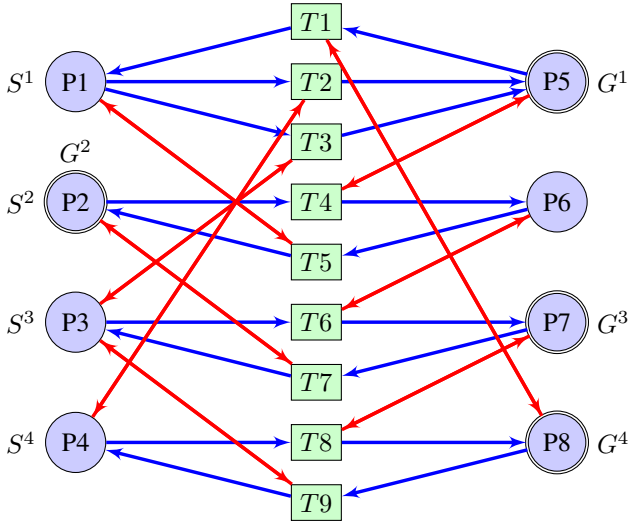 

%$a = (\mbox{\em pre},\mbox{\em eff}) \in$ STRIPS$^1_1$ with literals $\mbox{\em pre}$ and $\mbox{\em eff}$ leads to edges in graph 

%$\neg \mbox{\em eff} \rightarrow a \rightarrow \mbox{\em eff}$ and  

%$\neg \mbox{\em pre} \rightarrow a \rightarrow \neg \mbox{\em pre}$ 

%\noindent{\bf Thm} If there is STRIPS$^1_1$ solution, so there is dMAPF solution \\ {\bf Proof idea:} Gate agent goes to $\neg \mbox{\em pre}$, Agent passes by to $\mbox{\em eff}$, Gate agent back.

\section{Compilation Schemes}

Next, we delve into some possible consequences of a complexity result by looking at problem transformations alias compiling schemes.

A general mapping for STRIPS planning problems to 1-safe Petri nets was established and used for task planning via Petri net unfolding
by \citeauthor{petrinetunfolding} (\citeyear{petrinetunfolding}).

Formally, \emph{Petri net} $C = \langle P,T,I,O\rangle$ consists of:
a finite set $P$ of places, a finite set $T$ of transitions, an input function $I: T \rightarrow  2^P$ (maps to bags of places) and an output function $O: T \rightarrow  2^P$.

A \emph{marking} of $C$ is a mapping $m: P\rightarrow \mathbb{N}$. A net $C$ with initial marking $m$ is \emph{safe} if each place always holds at most 1 token. A net is \emph{conservative} if the total number of tokens remains constant. 

Interpreting markings as states, and imposing an initial marking $m_I$ and goal marking $m_G$, there is a mapping $\phi$ of STRIPS\(_{1}^{1}\) planning tasks $p = \langle V,A,s_I,s_G \rangle$ to a conservative 1-safe Petri nets $C = \langle P,T,I,O\rangle$, 
with places $P = \phi(L(V))$, transitions $T = \phi(A)$, and input function $I$ defined by the relations $\phi(a) \rightarrow 
\{
\phi(\operatorname{pre}(a)),
\phi(\overline{\operatorname{eff}(a)}) 
\}$
and output function $O$ defined by the relations $\phi(a) \rightarrow
\{
\phi(\operatorname{pre}(a))
\phi(\operatorname{eff}(a))
\}$
for each 
$a = \langle\operatorname{pre}(a),\operatorname{eff}(a)\rangle \in A$. This mapping fulfills the following property. 

STRIPS\(_{1}^{1}\) is reachable in $t$ steps from $s_I$ to $s_G$
if and only if the marking $m_G$ is reachable in $t$ steps
from marking $m_I$. Any bounded Petri net with a finite reachability set can be modeled by a finite state machine with one state for each reachable marking, indicating which transitions fire between the states. Although reachability in safe Petri nets is PSPACE-complete~\cite{CHENG1995117}, we do not yet know whether being additionally conservative casts it as NP-complete. The usual Petri net unfolding algorithms do not directly provide a polynomial time bound.

Moreover, there is also a possible reduction from directed and collaborative MAPF problems to STRIPS\(_{1}^{1}\) and Petri nets, where the tokens are represented by the $|V|$ agents and the inital states and goal states of the agents defined by $s_I$ and $s_G$. 

There are different formuations of collaborative MAPF. In our case Collaboration is modeled by the concept of supporting nodes for edge crossings. 

The literals are the graph nodes, and the edges are defined by the actions. The preconditions of the actions are modeled as collaboration nodes $\operatorname{pre(a)}$, required for moves from $\overline{\operatorname{eff}(a)}$ to $\operatorname{eff}(a)$. Although Nebel\citeyear{Nebel2024-NEBTCC} has shown that the small solution hypothesis holds for directed MAPF, this does not easily transfer to STRIPS\(_{1}^{1}\). 

Note that solutions produced for MAPF are inherently parallel, while the original semantics for STRIPS was sequential, resulting in a totally ordered plan. Recall that some planners like SATplan~\cite{satplan} produce parallel optimal plans, while others like SymBA*~\cite{symba} produce optimal sequential plans.

We implemented a collaborative  MAPF solver based on conflict-based search (CBS) for validating this transformation. We solved such Coop(Di)MAPF with a variant of CBS. The solution below is working, and  smaller than shortest STRIPS$^1_1 $ plan, due to agents moving at the same time (time steps from left to right, node ID leading and assignment to variables in brackets): \medskip 
\\
\noindent
\begin{small}
Agent 0: 0(0) 4(1) 4(1) 4(1) 4(1) 4(1) 0(0) 0(0) 0(0) 4(1) 0(0) 4(1)\\
Agent 1: 1(0) 1(0) 5(1) 5(1) 5(1) 5(1) 5(1) 1(0) 1(0) 1(0) 5(1) 1(0) \\
Agent 2: 2(0) 2(0) 2(0) 6(1) 6(1) 6(1) 6(1) 6(1) 2(0) 2(0) 2(0) 6(1)\\Agent 3: 3(0) 3(0) 3(0) 3(0) 7(1) 7(1) 7(1) 7(1) 7(1) 7(1) 7(1) 7(1) \\
\end{small}

See Fig.~\ref{fig:mapping} for an example illustrating the bijection to collaborative directed MAPF and conservative safe Petri nets.

\section{Conclusion}

The computational complexity for fragments of planning problems like STRIPS task is a fascinating fundamental topic with applications in Petri-net and MAPF theory. 

From different angles, we looked at STRIPS with one precondition and one effect, which was the most challenging of remaining open problems in that class. 
The explicit-state and SAT solver results for small instances and the limits in shortest path length for larger instances to cover the entire cube, indicated
that the solution lengths might be polynomial and the small solution hypothesis for STRIPS$^1_1$ to be true.

While in this paper we are closing the gap, we consider the problem of the NP-completeness of the STRIPS$^1_1$ still to be open. 
%\section{Acknowlegements}

%We thank the world's STRIPS organization for delivering polynomial plans \ldots [to be adapted]

\vfill 

\pagebreak

\bibliography{main}

%\end{document}

\vfill 

\pagebreak

\section*{Appendix: A Failed Proof Attempt by an LLM}

%We need to show that for any STRIPS\(^1_1\) planning instance the length of an optimal plan is bounded by a polynomial. 

%In the following we present what the AI came up with with only minor edits for readability. Although being impressed about the competent answer, we judge following derivations as incomplete, but as a help to envision a proof.

We gave the above text with the prompt to complete the proof to DeepSeek on its web interface. To query a state-of-the-art LLM known to be competent in math tasks for delivering a proof to a challenging problem that some researchers tackled unsuccessfully before, clearly is an unusual way to attack a computational complexity problem, and surely first in AI planning. We did not expect any good result. But we were impressed by the competent answer of the DeepSeek (v3.2)'s on the blunt question to complete the proof based on the derivations (provided as a PDF) we had so far. The model, DeepSeekMath-V2, scored 118 out of 120 points on questions from the 2024 William Lowell Putnam Mathematical Competition, beating the top human score of 90. The model also performed at the level of gold-medal winners in the International Mathematical Olympiad (IMO) 2025 and the 2024 China Mathematical Olympiad~\cite{shao2025deepseekmathv2selfverifiablemathematicalreasoning}. 
%The results are described in a preprint posted on arXiv on 27 November.

Although the large language model introduced several important  concepts on its own including a relevant partitioning of the literal graph into SCCs and other valid reasoning steps, we are, however, not  convinced that its proof is complete. It is for sure verbose by introducing a lemma that essentially says that some special subgraphs (namely SCCs) of a graph do not have more edges than the supergraph. To proof this it uses incorrect statements about the tree height in the overlap tree. Example $n=4$ from the paper has treeheight 11 and is a counterexample. Prompt and answer were as follows.
%
%
%As reported in an article on December 4th, 2026 in Nature\footnote{\url{https://www.nature.com/articles/d41586-025-03959-9}}

\color{red}
Can you complete the proof that Strips planning with actions that have only one precondition and one effect can have plan length that are at most polynomial in the number of variables.
%\end{red}

\medskip

%First, the LLM briefly recalled the setting
\color{blue}
\begin{enumerate} 
\item
\( V \) = set of \( n \) propositional variables.
\item A literal is \( v \) or \( \bar{v} \).
\item An action \( a \) has \( \text{pre}(a) \) (single literal) and \( \text{eff}(a) \) (single literal).
\item literal graph \( G \): nodes = literals \( L(V) \), edges = actions \( A \), each \( a \) from \( \text{pre}(a) \) to \( \text{eff}(a) \).
\item A plan \( \omega = \langle a_1, \dots, a_k \rangle \) takes \( s_I \) to \( s_G \).
\end{enumerate}
\color{black}

%and continues then as follows \medskip

\color{blue}
We define for each literal \( l \) the \emph{last achieving action} in the plan as follows.
Let \( t_l \) be the largest index \( i \) such that \( \operatorname{eff}(a_i) = l \), or 0 if \( l \in s_I \) and never changed in plan.
If \( t_l = 0 \) and \( l \notin s_I \), the plan is invalid.
The subplan \( \omega_{l,t_l} \) is the minimal subsequence of \( \omega \) that makes \( l \) true at time \( t_l \). %, computed recursively as stated.

In the examples we observe that subplans for different goal literals share prefixes but eventually diverge and operate on disjoint sets of edges in literal graph (once diverged).

If two goal literals \( l_1 \) and \( l_2 \) have last achieving actions \( a_{i_1} \) and \( a_{i_2} \) with \( i_1 < i_2 \), then \( \omega_{l_1, i_1} \) and \( \omega_{l_2, i_2} \) may share actions in prefix, but after the last shared action, they use disjoint action sets. After divergence, each subplan is responsible for the truth of a unique literal and only affects its own subset of literals until its goal literal is set.
After divergence, a subplan (for goal literal \( l \)) only traverses edges within a strongly connected component (SCC) of the literal graph containing \( l \) and its preconditions in the plan. This is true, because in order to achieve \( l \), the plan must make \( \operatorname{pre}(a) \) true for some action \( a \) with \( \operatorname{eff}(a) = l \), which again needs its precondition, etc. This forms a dependency subgraph which is contained in an SCC reachable from \( s_I \) literals. 
Since \( |L(V)| = 2n \) we have that
\begin{lemma}
Each SCC has at most \( O(n^2) \) edges.   
\end{lemma}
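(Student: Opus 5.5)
The plan is a direct counting argument in the literal graph \(G=\langle L(V),A\rangle\). Edges of \(G\) are actions, and an action \(a\) is determined by its ordered pair of literals \(\langle \pre(a),\eff(a)\rangle\). The edge set of any subgraph of \(G\), in particular of an SCC, is therefore a subset of the ordered pairs of literals in that SCC. We do not need any structure of plans or overlap trees here; the statement is purely about the graph.

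The first step is to note that distinct actions correspond to distinct ordered pairs, so \(G\) has no parallel edges. We may assume this, since duplicate actions are indistinguishable for planning and can be merged without changing plan lengths. Let \(C\subseteq L(V)\) be the node set of an SCC, with \(|C|=c\). Every edge of the SCC has both endpoints in \(C\), so the SCC has at most \(c^2\) edges. By the well-definedness convention and the normalisation at the start of the section on good tasks, self-loops \(\langle l,l\rangle\) and pairs \(\langle v,\bar v\rangle\) can also be excluded, which sharpens this to \(c(c-1)\).

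The second step is to use \(c\le |L(V)|=2n\). This gives at most \(2n(2n-1)\le 4n^2-2n=O(n^2)\) edges per SCC. The bound matches the global count \(4n^2-4n\) of admissible STRIPS\(_1^1\) actions given in the hypercube section, and in fact the whole of \(G\) has \(O(n^2)\) edges. Because the SCCs partition the nodes, and each edge lies inside at most one SCC, the bound also holds for the sum over all SCCs.

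I expect no real obstacle. The only point needing care is the duplicate-action assumption: if \(A\) were a multiset, the lemma would be false as stated. We resolve this by taking \(A\) to be a set, as in the definition of a planning problem. The lemma is weak in isolation. Its role in the surrounding argument would be to bound how long a subplan can stay inside one SCC, and that step, not this lemma, is where the difficulty lies. As noted above, it fails, with the \(n=4\) example as a counterexample.
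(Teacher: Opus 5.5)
Your argument is correct and is essentially the paper's own brief justification: an SCC contains at most \(|L(V)|=2n\) literals, and since actions are distinct ordered pairs of literals, it has at most \((2n)^2=O(n^2)\) edges. The sharpening to \(c(c-1)\) is unnecessary, and well-definedness does not by itself exclude the no-op self-loops \(\langle l,l\rangle\) (they can simply be discarded), but the crude bound \(c^2\le 4n^2\) already suffices.
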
 

%{\bf Proof.} 
%Since each variable has at most two literals in an SCC, each SCC has at most \( 2n \) literals, hence \( O(n^2) \) possible edges. % (actions).

We show that even though a subplan may cycle multiple times over its subset of actions, the number of such cycles is limited by the number of other subplans and the number of actions in the subset.
Suppose a subplan repeats a cycle in its SCC multiple times. Each full cycle flips some variable \( v \) at least twice (true $\rightarrow$ false $\rightarrow$ true). But if \( v \) is flipped more than once, those extra flips might be unnecessary unless needed for other subplans’ preconditions.

We can bound the number of total flips of a variable by \( O(n) \) as follows. In a valid plan, flips of \( v \) are alternating \( v \) and \( \bar{v} \).  
Between two flips of \( v \), at least one other subplan must require \( v \) in the opposite polarity for its precondition.  
There are at most \( 2n \) literals that might need \( v \)’s state change for their own achievement.
Thus, each variable is flipped \( O(n) \) times and the total number of actions is \( O(n^2) \) per SCC.
%$\Box$
%\medskip

We now merge subplans. In the overlap tree, each leaf corresponds to a goal literal’s final achievement.
Let \( m \) be the number of goal literals (\( m \le n \)). Then the tree height is \(\le m \) (actually \(\le n \)). At each node, the subplan’s length is bounded by \( O(n^2) \) (actions within an SCC).
If subplans were completely disjoint after divergence, total length would be \(O(n^3) \).
But subplans share prefixes (common part), so total plan length is at most \(O(n^3) \). 

We can formalize this by induction on the number of goal literals not yet achieved in the suffix of the plan. Define \( \operatorname{Active}(i) \) as the set of goal literals whose last achieving action index  \(\ge i \). As \( i \) decreases from \( k \) to 1, \( \operatorname{Active}(i) \) grows. Each time \( \operatorname{Active}(i) \) changes, a subplan for a new goal literal starts. Between changes to \( \operatorname{Active}(i) \), the plan is working within the SCC of the currently active goal literals’ subplans, hence \( O(n^2) \) actions per segment.
The number of segments is at most \( n \) (each goal literal added once). Thus, \( k \le O(n^3) \).

Can we have a cycle on a set of variables that are all eventually false?  In STRIPS\(^1_1\), if all literals in a cycle are false at end, that cycle is irrelevant to goal state, so can be removed from plan. Thus, optimal plan has no such irrelevant cycles.

Also, can preconditions in a cycle remain false indefinitely? If a precondition of an action in cycle is false forever, that action never becomes applicable, so cycle not executed. So in executed plan, preconditions become true when needed.
Hence, any cycle executed must serve to keep some literal true for some subplan’s need later. This need is bounded by the number of subplans \( O(n) \), so that the number of cycle repetitions is bounded. 

\begin{theorem} 
For any STRIPS\(^1_1\) planning instance with \( n \) variables, the length of an optimal plan is \( O(n^3) \)
\end{theorem}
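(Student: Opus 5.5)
We would try to prove the $O(n^3)$ bound by exploiting the structure of a \emph{shortest} plan $\omega=\langle a_1,\ldots,a_k\rangle$ through the subplans $\omega_{l,i}$ and the overlap tree. We do not expect a complete proof: the appendix already shows the LLM's tree-height argument fails, and the statement is equivalent to an open question that, if true, would put STRIPS$_1^1$ plan existence in NP.

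\textbf{Step 1: reduce the plan to a union of subplans.} For each goal literal $l\in s_G$ let $t_l$ be its last achievement time. Show that every action of a shortest plan lies in $\bigcup_{l\in s_G}\omega_{l,t_l}$. Otherwise some action $a_j$ is not on any dependency chain. Deleting it should yield a valid, shorter plan, because every later precondition is supported by the chain recorded in the corresponding $\omega_{\cdot,\cdot}$. The catch is that deleting $a_j$ may remove an effect that \emph{undoes} another literal. We would handle this with an exchange argument: also delete the chain of actions whose only purpose was to repair what $a_j$ destroyed. This gives the first reduction: $k$ is at most the total number of distinct edges, counted with multiplicity, in the overlap tree.

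\textbf{Step 2: bound each branch.} Each root-to-leaf path of the overlap tree is a walk in the literal graph $G$, which has $2n$ nodes and at most $4n^2$ edges. We would split each walk into phases delimited by the times a new variable is flipped for the last time. Within a phase, the walk lives in one SCC of $G$. The aim is to show that a shortest plan never repeats the same edge of $G$ twice inside one phase on one branch, since otherwise the intermediate loop could be cut out. That would give $O(n^2)$ actions per phase, $O(n)$ phases, and hence $O(n^3)$ per branch. Combining branches requires Step 3.

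\textbf{Step 3: limit the repetitions.} We must show the tree has only $O(1)$ amortized branching cost, for instance via the observation that once subplans diverge they use disjoint action sets, so the total over all branches is bounded by the edges of $G$ times the number of phases.

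\textbf{Main obstacle.} The hardest step is justifying the cut-and-paste claims in Steps 2 and 3. Removing a loop from one subplan can change the state seen by an \emph{interleaved} subplan, because actions of one branch delete literals that another branch relied on. The counterexample with tree height 11 for $n=4$ shows that cycles really do repeat, so edge-disjointness alone is false. Our first attempt would be a potential argument: charge each repeated traversal of a cycle to a distinct \emph{pair} (variable flip, precondition literal needed by another branch). This would bound the repetitions by $O(n^2)$ pairs times cycle length, giving $O(n^3)$ if cycles are short.

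Two pieces remain unproved. We cannot yet show that cycle lengths are bounded, nor that each pair is charged only once. Both fail for general safe Petri nets, so any proof must use the specific one-precondition, one-effect structure, perhaps through Lemma~\ref{cor:bidir}-type counting of bidirectional edges.
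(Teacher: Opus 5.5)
As you concede, this is not a proof. The gap is exactly where the paper's appendix argument also fails: there is no polynomial bound on the length of a single goal chain.

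Your Step 1 is correct, and easier than you make it. The subsequence $U=\bigcup_{g\in s_G}\omega_{g,t_g}$ is itself a plan. Every action in $U$ comes with the last achiever of its precondition. No action of $\omega$ between that achiever and its consumer deletes the precondition, since otherwise it would be false when needed, so no action of $U$ does either. The same reasoning keeps each goal literal true after $t_g$. Hence a shortest plan equals $U$, and $k\le\sum_{g}|\omega_{g,t_g}|\le n\,h$, where $h$ is the length of the longest goal chain. Deleting one off-chain action at a time, as you propose, can cascade, so the exchange argument is neither needed nor helpful. This also makes Step 3 superfluous: combining branches costs only a factor $n$, and any polynomial bound on $h$ would already give the small-solution property. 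Everything therefore rests on Step 2.

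The key claim of Step 2 is false. You claim a shortest plan never repeats an edge of $G$ within one phase on one branch. Take the paper's $n=4$ instance compiled in Fig.~\ref{fig:mapping}, the one with overlap-tree height $11$:
\begin{itemize}
\item actions $(x_4,\bar{x}_1)$, $(\bar{x}_4,x_1)$, $(\bar{x}_3,x_1)$, $(x_1,x_2)$, $(\bar{x}_1,\bar{x}_2)$, $(x_2,x_3)$, $(\bar{x}_2,\bar{x}_3)$, $(x_3,x_4)$, $(\bar{x}_3,\bar{x}_4)$;
\item $s_I=0000$ and $s_G=1011$, with bits $x_1x_2x_3x_4$.
\end{itemize}
Breadth-first search from $s_I$ discovers exactly one new state per layer. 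So the shortest plan is unique up to the first action and visits $0000,1000,1100,1110,1111,0111,0011,0001,1001,1101,0101,0100,0110,0010,1010,1011$.

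The last flips of the variables occur at steps $12$--$15$, so steps $1$--$11$ form one phase. $G$ is strongly connected, so the SCC refinement gives no leverage either. Yet the chain of the goal $x_4$ realizes the walk $x_1\to x_2\to x_3\to x_4\to\bar{x}_1\to\bar{x}_2\to\bar{x}_3\to x_1\to x_2\to x_3\to x_4$ at steps $1,\dots,9,12,15$. It uses $(x_1,x_2)$ at steps $2$ and $9$, both inside that phase.

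The loop (steps $2$--$8$) cannot be cut. Steps $2$--$7$ also lie on the chain of the goal $x_1$, so they must stay. Once they are kept, the $x_1$ produced at step $1$ is destroyed at step $5$, so the chain cannot be reattached to it. This is the general mechanism: a shortest plan has no redundant actions, so every return of a chain to a literal is preceded by a deletion that itself lies on some goal chain.

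Your fallback charging scheme is the missing statement in disguise. It is the same unproved step as the appendix's claim that each variable flips $O(n)$ times. Cycle length is not the obstacle, since any closed walk decomposes into simple cycles of at most $2n$ literals. If the pairs are built from individual flip events, their number is not bounded a priori. If they are (variable, literal) pairs, there are only $O(n^2)$ of them, but proving the charge injective is precisely the per-chain bound. So no step of the proposal controls $h$, and the statement remains open, as the paper itself acknowledges.
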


{\bf Proof.} We combine the results.
as the literal graph SCC size is at most \( O(n^2) \), each variable flips at most \( O(n) \) times, and the overlap tree has \( O(n) \) segments, each segment \( O(n^2) \) actions.  
Thus, the total plan length is bounded by \( O(n^3) \). $\Box$

\medskip

\end{document}